\documentclass[twoside,11pt]{article}

\usepackage{blindtext}
\usepackage{booktabs}

\usepackage{amssymb}
\usepackage{mathtools}
\usepackage{amsthm}
\usepackage{microtype}
\usepackage{graphicx}
\usepackage{subfigure}
\usepackage{booktabs} 
\usepackage{amsfonts} 
\usepackage{amsmath} 
\usepackage{bbm, dsfont}
\usepackage{enumerate}
\usepackage{url} 
\usepackage{xcolor}
\usepackage{tikz}
\usepackage{units}
\usepackage{multicol}
\usepackage{algorithm2e}

\newcommand\x{\mathbf{x}}
\newcommand{\y}{\mathbf{y}}
\newcommand{\X}{\mathbf{X}}

\newcommand{\bb}{\boldsymbol{\beta}}
\newcommand{\tb}{\tilde{\boldsymbol{\beta}}}
\newcommand{\hb}{\hat{\boldsymbol{\beta}}}
\newcommand{\uu}{\mathbf{u}}
\newcommand{\bl}{\boldsymbol{\lambda}}

\renewcommand{\b}{\boldsymbol\beta}

\newcommand{\sechead}[1]{}



%

%
%
%

\usepackage[preprint]{jmlr2e}


\providecommand{\customgenericname}{}
\newcommand{\newcustomtheorem}[2]{%
  \newenvironment{#1}[1]
  {%
   \renewcommand\customgenericname{#2}%
   \renewcommand\theinnercustomgeneric{##1}%
   \innercustomgeneric
  }
  {\endinnercustomgeneric}
}

\newcustomtheorem{customthm}{Theorem}
\newcustomtheorem{customlemma}{Lemma}


\usepackage{lastpage}
\jmlrheading{23}{2022}{1-\pageref{LastPage}}{1/21; Revised 5/22}{9/22}{21-0000}{Nathan Wycoff et al}


\ShortHeadings{Proximal Iteration for Nonlinear Adaptive Lasso}{Wycoff, Singh, Arab and Donato}
\firstpageno{1}

\begin{document}

\title{Proximal Iteration for Nonlinear Adaptive Lasso}


\author{\name  Nathan Wycoff \email nwycoff@umass.edu \\
       \addr Department of Mathematics and Statistics\\
       University of Massachusetts, 
       Amherst, MA 01003, USA
       \AND
       \name  Lisa O. Singh \email lisa.singh@georgetown.edu \\
       \addr Computer Science Department and McCourt School of Public Policy\\
       Georgetown University,
       Washington, DC 20007, USA
       \AND
       \name  Ali Arab \email ali.arab@georgetown.edu \\
       \addr Mathematics and Statistics Department\\
       Georgetown University,
       Washington, DC 20007, USA
       \AND
       \name Katharine M. Donato \email katharine.donato@georgetown.edu \\
       \addr School of Foreign Service\\
       Georgetown University,\\
       Washington, DC 20007, USA}

\editor{}

\maketitle

\begin{abstract}

Augmenting a smooth cost function with an $\ell_1$ penalty allows analysts to efficiently conduct estimation and variable selection simultaneously in sophisticated models and can be efficiently implemented using proximal gradient methods.
However, one drawback of the $\ell_1$ penalty is bias: nonzero parameters are underestimated in magnitude, motivating techniques such as the Adaptive Lasso which endow each parameter with its own penalty coefficient. 
But it's not clear how these parameter-specific penalties should be set in complex models.
In this article, we study the approach of treating the penalty coefficients as additional decision variables to be learned in a \textit{Maximum a Posteriori} manner, developing a proximal gradient approach to joint optimization of these together with the parameters of any differentiable cost function.
Beyond reducing bias in estimates, this procedure can also encourage arbitrary sparsity structure via a prior on the penalty coefficients.
We compare our method to implementations of specific sparsity structures for non-Gaussian regression on synthetic and real datasets, finding our more general method to be competitive in terms of both speed and accuracy.
We then consider nonlinear models for two case studies: COVID-19 vaccination behavior and international refugee movement, highlighting the applicability of this approach to complex problems and intricate sparsity structures. 

\end{abstract}

\begin{keywords}
    variable selection, proximal algorithms, adaptive Lasso, structured sparsity, penalized regression
\end{keywords}

\renewcommand{\b}{\boldsymbol{\beta}}
\renewcommand{\l}{\boldsymbol{\lambda}}

\section{Introduction}

\sechead{Variable Selection can be done via Penalties}
Over the past decades, statistical models have become so large and sophisticated that understanding their behavior has become a major challenge.
One way to improve model interpretability is to allow the model to have a large number of parameters, giving many possibilities before seeing the data, but then to eventually only use the most influential and set the majority to zero during the learning process.
In the context of linear regression, thresholding a coefficient to zero is equivalent to excluding the associated variable from the model. 
More complex models, if properly parameterized, can likewise become simpler to interpret if many of their parameters are identically zero.
One way to achieve zeroed-out parameter estimates is to use \textit{sparsifying} penalties which, when used to augment a loss function, lead to exact zeros at optimality.
Deonte our loss function by $\mathcal{L}$, the sparsifying penalty function by $g$, the parameter vector by $\b\in\mathbb{R}^P$ and define $\tau>0$ yielding:
\begin{equation}
\begin{split}
    \underset{\b\in\mathbb{R}^P}{\min}\,\, \mathcal{L}(\b) + \tau g(\b) \,\, .
\end{split}\label{eq:pencost}
\end{equation}
By varying $\tau$, we obtain a tradeoff between fitting the data and avoiding the penalty.
In this article, we will primarily be concerned with loss functions constituted by the negative log-likelihood of a given statistical model. 
However, the optimization machinery we consider in Section \ref{sec:optim} is more widely applicable to other problems of the form \ref{eq:pencost}.

A wide variety of penalty functions $g$ that induce sparsity in $\b$ have been studied (see Section \ref{sec:penbg}). 
Perhaps the most widely used is the $\ell_1$ penalty $\Vert \b \Vert_1 = \sum_{p=1}^P |\beta_p|$, which is referred to as \textit{Lasso regression} \citep{Tibshirani1996,taylor1979deconvolution}.
This may be viewed as independently placing a Laplace prior on $\beta_p$ and subsequently performing \textit{Maximum a Posteriori} (MAP) inference \citep{Tibshirani1996,Park2008}.
Beyond simply imposing that there be many zero parameters, we may also wish to impose rules stating that some parameters should be jointly zero or nonzero.
We refer to such a situation as \textit{structured sparsity}.
Extensions of the $\ell_1$ penalty allow for this, such as the Group Lasso \citep{yuan2006model,bakin1999adaptive}, which partitions the variables into jointly zero or non-zero sets.
The Sparse Group Lasso \citep{simon2013sparse} additionally allows for individual zero parameters within nonzero groups. 
In the context of ordered coefficients (associated with discrete time, say), the Fused Lasso \citep{tibshirani2005sparsity} imposes sparsity on their consecutive differences.
More general sparsity structures are available via graphs \citep{jacob2009group,huang2011learning} or trees \citep{jenatton2011structured,kim2012tree}.

In each case, a sparsifying penalty must be nondifferentiable.
But this complicates the iterative solution of Problem \ref{eq:pencost} in the general case. 
\textit{Proximal gradient} methods \citep{parikh2014proximal} allow for the solution of these problems without any loss in convergence rate and are easily implemented so long as a convex-analytic object known as the \textit{proximal operator} associated with $g$ may be efficiently treated numerically (see Section \ref{sec:proxbg}).
However, efficient computation of the proximal operator's action is essential to developing speedy proximal methods, and this is more difficult for certain sparsity structures than others.
For example, for Group Lasso, the case where an individual regression coefficient belongs to multiple groups, called the overlapping case, is more complicated than the non-overlapping case, as the overlap couples the proximal problems \citep{yuan2011efficient} and increases the cost of the computation of the proximal operator's action.
In practice, this necessitates schemes which duplicate variables \citep{obozinski2011group}.
Overlapping group Lasso is motivated by, for example, hierarchical second order sparse regression, where the main effects, interaction terms, and quadratic terms of a given pair of variables are jointly zero or nonzero.Thus, each main effect and quadratic term belongs to many groups.
Not all sparsity structures are guaranteed to have a proximal operator whose action can be easily computed, and such structures are inefficiently treated in practice by proximal gradient methods.

A separate issue with Lasso-like penalties are that they lead not only to sparsity in certain parameters, but also to shrinkage in the rest which can lead to significant bias \citep{Zhang2010}. 
One proposed solution is the Adaptive Lasso:
rather than a simple $\ell_1$ penalty, \citet{zou2006adaptive} proposes a weighted penalty $\Vert \b \Vert_1^{\l} = \sum_{p=1}^P \lambda_p |\beta_p|$, where $\lambda_p$ is a nonnegative weight associated with variable $p$. 
\citet{zou2006adaptive} proposes to fix the penalty coefficients to values determined in a preprocessing step, where a simpler estimator is used to form a rough estimate $\hat\beta_p$ of $\beta_p$, and then $\lambda_p = \frac{1}{|\hat\beta_p|^\gamma}$. 
Here $\gamma$ is a nonnegative hyperparameter.
This is simple enough for a linear model with more observations than parameters, but for a complicated likelihood, initial estimates may be difficult to come by.

\sechead{Our MAP-Bayesian approach to nonlinear adaptive lasso}
In this article, we will instead treat $\l$ in a MAP-Bayesian manner, endowing it with some hyperprior 
and jointly optimizing it together with $\b$:
\begin{equation} \label{eq:problem}
    \underset{\boldsymbol\beta\in\mathbb{R}^P,\boldsymbol\lambda\in\mathbb{R}_+^{P}}{\min}\,\, 
    \mathcal{L}(\boldsymbol\beta) + \sum_{p=1}^P \big[\tau\lambda_p |\beta_p| - \log\lambda_p\big] -\log p_\lambda (\boldsymbol{\lambda}) \,\,,
\end{equation}
where $p_\lambda$ is the density of a continuous joint prior for $\boldsymbol\lambda$, and the center term is the negative log of the Laplace distribution's density (with $\lambda_p$ serving as an inverse scale parameter for $\beta_p$ \textit{a priori}).
The main nondifferentiable component of Problem \ref{eq:problem} is given by the $\lambda_p|\beta_p|$ terms.
At first glance, it seems that we could just use the proximal operator associated with the absolute value function, which is known as the Soft Thresholding Operator \citep{Donoho1995}, to optimize this.
But if we are jointly optimizating $\b$ and $\l$, we actually need a new proximal operator associated not with the $\mathbb{R}\to\mathbb{R}_+$ function $f_1(\beta) = |\beta|$, but rather with the $\mathbb{R}\times \mathbb{R}_+\to\mathbb{R}_+$ function $f_2(\beta,\lambda) = \lambda|\beta|$, which is nonconvex. 
In Section \ref{sec:vsto}, we study this proximal operator and a related one, developing a simple closed form expression for their action.
This allows for efficient local optimization of Problem \ref{eq:problem} using a proximal gradient method so long as $p_{\lambda}$ is smooth (as is $\mathcal{L}$).

\sechead{You can have arbitrary structure with this one prox operator}
As we discuss in Section \ref{sec:stats}, using a sufficiently diffuse independent prior on $\l$ allows for variable selection on arbitrary likelihoods of sufficient regularity while obtaining the oracle property of \cite{fan2001variable}.
But we can also consider richer families of priors.
For instance, dependent priors on $\bl$ allow for operationalization of prior information regarding expected patterns of sparsity.
Take the Group Lasso.
We can express the prior information that might motivate this in our framework by placing a hierarchical prior on $\bl$, thereby encouraging penalty coefficients associated with regression coefficients from a given group to take on similar values.
But unlike the group Lasso, active groups will have their penalty coefficients shrunk to a small value, decreasing the bias present in the constant-penalty case.
In contrast to the existing state of the art, where increasingly complicated sparsity structures may lead to increasingly complicated proximal operators, by placing the structure in a smooth prior $P_\lambda$, we can impose arbitrary structure using the same proximal operator, in effect shifting the structure from the nonsmooth to the smooth part of the cost function.

To summarize, the major contributions of this article are as follows:
\begin{enumerate}
    \item We propose a one-algorithm-fits-all framework for optimization to perform debiased variable selection with general sparsity structures.
    \item We describe two simple proximal operators associated with the absolute value function.
    \item We show how to use these to conduct joint optimization for the adaptive Lasso with arbitrary smooth likelihoods.
    \item We demonstrate the validity of this procedure on complex sparsity structures, non-Gaussian likelihoods and on a battery of real and sythetic datasets and two social science applications (vaccination behavior and global migration).
    \item We show how to integrate this technique into a typical machine learning workflow using an automatic differentiation/linear algebra framework and demonstrate that it can be competitive in terms of execution time against model-specific methods while being broadly applicable.
\end{enumerate}

The remainder of this article is structured as follows.
Section \ref{sec:litreview} reviews relevant background before situating this study within the literature, developing a Bayesian perspective on the reweighted $\ell_1$ procedure of \cite{candes2008enhancing} along the way.
Section \ref{sec:optim} examines the optimization problem induced by MAP inference on the Adaptive Lasso and develops pertinent proximal operators.
Section \ref{sec:stats} determines the basic asymptotic properties of estimators based on this method while Section \ref{sec:priors} gives some examples of priors inducing popular sparsity structures.
Next, Section \ref{sec:applications} compares the procedure described here to previously proposed methods on a battery of real and synthetic datasets before showing how this approach can yield substantive conclusions on two social science case studies.
Finally, conclusions and future directions are presented in Section \ref{sec:conclusion}.

\section{Preliminaries and Background}\label{sec:litreview}

We begin this section with an overview of selected sparsifying penalties (Section \ref{sec:penbg}) before reviewing the coordinate descent and proximal gradient approaches to solving the associated optimization problems (Sections \ref{sec:cdbg} and \ref{sec:proxbg}). 
We then show how a popular approach to reweighting on-the-fly may be viewed as a coordinate descent procedure within the framework we study (Section \ref{sec:reweigh}).

\subsection{Penalty-Based Variable Selection}\label{sec:penbg}


\sechead{chapter contents} The importance of variable selection in nonlinear models has prompted decades of research and yielded a multitude of methods.  
We now review selected penalty-based approaches to variable selection.

\sechead{Adapt offline (before/after/between optim)} 
Some authors advocate for penalty wieghts adapted ``offline", that is, before or after the optimization process.
As previously mentioned, the Adaptive Lasso \citep{zou2006adaptive} specifies $\lambda_p=\frac{1}{|\hat{\beta}_p|^\gamma}$, where $\hat{\beta_p}$ is some initial estimate of the regression coefficient, though an initial estimate $\hat{\beta}_p$ is a nontrivial ask if $\mathcal{L}$ is complicated.
\citet{buhlmann2008} and \citet{candes2008enhancing} propose to iterate this procedure, updating the penalty coefficient $\lambda_p$ with new $\hat{\beta_p}$.
Alternatively, we may use Lasso for variable selection only, then proceed to an unpenalized procedure \citep{Efron2004,meinshausen2007relaxed,Zou2008}.

\sechead{Adapt via MCMC} 
We have seen that many penalties can be viewed from the MAP-Bayesian perspective, but there have also been more fully Bayesian approaches to variable selection. 
Indeed, several authors have proposed simulation-based procedures for adaptation of $\bl$.
\citet{kang2009self}, \citet{leng2014bayesian} and \citet{mallick2014new} place a Gamma prior or similar for $\lambda_p$ and conduct inference via Gibbs sampling.
The Horseshoe Prior \citep{Carvalho2010} specifies a conditional Normal prior for $\beta_p$ and a Half Cauchy prior on the the prior variances.
The Horseshoe prior enjoys generality as well as fast specific implementations such as those of \citet{Terenin2019} or \citet{Makalic2015}\footnote{The reader is referred to \cite{Bhadra2019} for an extensive comparison of the Lasso and Horseshoe models.}.
In a similar vein, \cite{bhattacharya2012bayesian} instead take the approach of specifying a Dirichlet prior on the regression coefficients.
Another approach is the Spike-Slab prior \citep{Mitchell1998} which uses discrete latent variables to categorize whether $\beta_p$ was \textit{a priori} sampled from the spike or the slab.
This complicates gradient-based inference. 
Somewhat in between Bayesian and penalized likelihood approaches lies Sparse Bayesian Learning \citep{tipping2001sparse}.
Here, sparsity comes not from nonsmooothness of the likelihood with respect to $\beta$, but rather by shrinking prior variance terms (and hence posterior variance terms) to zero via empirical Bayes.
As originally proposed, this requires a linear model and Gaussian error structure, though it can be expanded to Gaussian mixtures \citep{sandhu2021nonlinear}.
\citet{helgoy2019sparse} applied this framework to the Bayesian Lasso.

Nonconvex penalties do not vary the regularization strength but are directly constructed to impose minimal bias on nonzero coefficients, such as the Smoothly Clipped Absolute Deviation function \citep[SCAD]{fan2001variable,Hunter2005} or Minimax Concave Penalty \citep[MCP]{Zhang2010}.
These penalties do not correspond to proper priors, as they place constant, positive density arbitrarily far from the origin.
As with the $\ell_1$ penalty, imposing structured sparsity in these frameworks involves changing the nonsmooth penalty (and hence proximal operator).

Other penalties, exemplified by the ridge penalty \citep{hoerl1970ridge}, do not serve primarily to select variables, but rather to stabilize model fit or improve predictions.
These are not the primary focus of this article; however, they can induce structure when used in combination with nonsmooth penalties.
For example, \citet{slawski2010feature} propose an interesting approach to imposing structured sparsity, which, like our proposed method, can deal with generic sparsity structure while using only a single proximal operator (the standard Soft Thresholding Operator of Section \ref{sec:proxbg}, in their case).
In particular, they impose a standard $\ell_1$ penalty together with a structured $\ell_2$ penalty of the form $\b^\top\Sigma\b$, where the matrix $\Sigma$ encodes the group structure. 
The smooth penalty imposes a pattern in the coefficient magnitudes, with the $\ell_1$ penalty subsequently thresholding small parameters to zero.
However, this approach suffers from bias induced by both the $\ell_1$ and smooth penalties, while our approach allows for debiased structured sparsity.
Furthermore, the use of a Gaussian smooth structure is more restrictive than our general $\bl$ prior.
Indeed, the theoretical analysis in Section \ref{sec:stats} suggests that a Gaussian prior may not be sufficiently diffuse in our setting to allow for the oracle property.

\sechead{Structured Sparsity in Lasso:}
Oftentimes, sets of covariates are conceptually related, and it is desirable to impose sparsity at the group level.
Group Lasso \citep{yuan2006model} penalizes the $\ell_2$ norm of each group, imposing all zero or nonzero values for each coefficient of an entire group.
Sparse Group Lasso \citep{zhou2012modeling} allows individual coefficients within a nonzero group to themselves be zero.
 Group Lasso requires some special care in the case that groups overlap \citep{jenatton2011structured,bach2012structured}.
Fused Lasso \citep{tibshirani2005sparsity} allows for sparsity in ordered data. Extensions have been made to the spatiotemporal case for Lasso as well as for spike-slab models \citep{andersen2017bayesian}.
\cite{shervashidze2015learning} propose to use Variational Bayes to transfer sparsity structure between tasks in the multi-task setting;
 see \citet{gui2016feature} for a review of structured sparsity.
Finally, quite general structures can be imposed via graphs \citep{jacob2009group,huang2011learning} or trees \citep{jenatton2011proximal,kim2012tree}.
Just as the $\ell_1$ norm is a convex relaxation of the $\ell_0$ penalty, these structured penalties may be viewed as relaxations of combinatorial penalties \citep{obozinski2012convex}.
\citet{zhao2016bayesian} propose a simulation-based approach to structured sparsity.

\sechead{Compressed Sensing} 
Another perspective on sparsity comes from the \textit{compressed sensing} view \citep{gilbert2002near,candes2006robust}.
Group Lasso has been developed in the compressed sensing community \citep{ohlsson2010segmentation}.
\cite{ziniel2012generalized} develop a broad class of sparse signal decoding algorithms using the optimization procedure of \cite{rangan2011generalized} which allows for quite general structure on sparsity, but with a linear mixing assumption on the sparse signals. 
\cite{baraniuk2010model} study conditions under which we can expect recovery of structured sparse signals.
Several authors propose structured sparsity by specifying a latent indicator variable $z_i$ associated with each plausibly sparse parameter $\beta_i$ and endowing it with some joint prior, in the case of \citet{dremeau2012boltzmann} one similar to the Spike-Slab prior in the statistics community.
Powerful algorithms have been proposed for specific classes of problems \citep{rangan2011generalized,schniter2010turbo}, but optimization in the fully nonlinear case is complicated by the discrete nature of the indicator variables.
In contrast, we encourage structural sparsity through the continuous penalty coefficients of the $\ell_1$ norm.

\subsection{Coordinate Descent for Simple Likelihoods}\label{sec:cdbg}
The fastest commonly used algorithms for sparse inference in Generalized Linear Models (GLMs) are based on coordinate descent, notably the R package \texttt{glmnet} \citep{glmnet}. 
Intuitively, sparsifying penalty terms are inherently axis-aligned, and so if the likelihood term itself is sufficiently well-behaved, a similarly axis-aligned algorithm like coordinate descent can be quite fruitful.
\texttt{glmnet} uses coordinate descent to fit GLMs with simple sparsity and is quite efficient when the dataset can be fit in memory.
The key to fast coordinate descent algorithms is to skip coordinates which are going to be zeroed out \citep{tibshirani2012strong}, an approach expanded to the sparse group case by  \citet{ida2019fast}.

However, coordinate descent is most efficient when the coordinate subproblems are available in closed form (as in the Gaussian case) or at least can be efficiently solved using an iterative method like Newton's (as is the case for GLMs).
In this article, we will be concerned with general, possibly nonconvex likelihoods for which we cannot expect a straightforward solution to the coordinate subproblems.
Instead, we will investigate a joint optimization approach based on a gradient iteration.

\subsection{Proximal Gradient Algorithms for Smooth Likelihoods}\label{sec:proxbg}


We are interested in minimizing $c$, a loss function composed of a complicated but smooth term $l$ augmented with a simple but nonsmooth regularizer $g$: 
\begin{equation}\label{eq:auglik}
    c(\x) := l(\x) + g(\x)\, ,
\end{equation}
where $g$ is given by a multiple of the $\ell_1$ norm in the case of Lasso regression.
Proximal gradient descent and its relatives are generally applicable algorithms in such a situation. 
Given a function $g$ with domain $\mathcal{X}$ and some norm parameterized by a positive definite matrix $\mathbf{C}$, their \textit{proximal operator} is defined as the following $\mathcal{X}\to\mathcal{X}$ mapping:
\begin{equation}
    \mathrm{prox}_{g}^\mathbf{C}(\mathbf{x}) = \underset{\mathbf{u} \in \mathcal{X}}{\mathrm{argmin}} \, g(\mathbf{u}) + \frac{1}{2}||\mathbf{x}-\mathbf{u}||_{\mathbf{C}}^2 \,\, .
\end{equation}
We refer to solving this argmin as as the \textit{proximal problem}, and to the quantity being minimized as the \textit{proximal cost}.
Intuitively, the proximal operator of a function $g$ evaluated at a vector $\mathbf{x}$ returns another vector $\mathbf{u}$ which is close to $\mathbf{x}$ (wrt $\mathbf{C}$) but does a better job minimizing $g$. 

Proximal gradient algorithms optimize the objective function in Equation \ref{eq:auglik} via iteration of a two step process. Given a current solution $\mathbf{x}^k$ the next iterate is given by:
\begin{align}\label{eq:prox_descent}
    \hat{\mathbf{x}}^{k+1} = \mathbf{x}^k - \mathbf{C}^{-1}\nabla \mathcal{L}(\mathbf{x}^k) \\
    \mathbf{x}^{k+1} = \mathrm{prox}^{\mathbf{C}}_{ g}(\hat{\mathbf{x}}^{k+1})\, .
\end{align}
Note that we have not included a subgradient of $g$ in the gradient descent step.
The matrix $\mathbf{C}$ is defined by the preconditioning strategy and the step size. 

Proximal operators are most useful if they can be efficiently evaluated.
In this article, we will assume diagonal $\mathbf{C}=\mathrm{diag}(s_1,\ldots,s_P)$ (allowing for variable step sizes for each parameter) and an additive regularizer $g(\x) = \sum_{p=1}^P g_p(x_p)$, which is conducive to breaking the proximal problem into subproblems defined along each axis.
We will henceforth consider the case where $g_p$ are identical for all $p$ and, with a slight abuse of notation, simply denote this as $g$.

Different penalties are associated with different proximal operators.
For example, when $g:\mathbb{R}\to\mathbb{R}$ is given by $g(x)=\lambda|x|$, the proximal operator is given by the Soft Thresholding Operator (STO):
\begin{equation}\label{eq:soft.thresh}
    \mathrm{prox}^{s}_{\lambda |x|}(x) = (|x|- s \lambda)^+\mathrm{sgn}(x)
    \, ,
\end{equation}
where $(a)^+$ gives $\max(0,a)$ and $\mathrm{sgn}(a)$ gives the sign of $a$.
See \cite{bach2012optimization} for more background on optimization with sparsity-inducing priors.


\subsection{The MAP-Bayesian Perspective of Reweighted $\ell_1$ Penalties}\label{sec:reweigh}

The simplicity of computation with Lasso penalties has lead authors to develop algorithms for estimation with other penalties via linearization, where a sequence of reweighted Lasso problems are solved.
For example, \cite{candes2008enhancing} propose to iteratively set $\lambda_i^{k+1} = \frac{1}{|\beta_i^k|+\epsilon}$. 
Intriguingly, we find that we can fit this procedure within the framework of Problem \ref{eq:problem}.
In particular, where optimization is via block coordinate descent alternating between $\boldsymbol\beta$ and $\boldsymbol\lambda$ blocks and the prior for each $\lambda_p$ is exponential with rate $\epsilon$.
This is because, with other parameters fixed, the stationarity condition with respect to $\lambda$ is:
\begin{equation}
    \frac{\partial}{\partial\lambda_p}[-\log p(\beta|\lambda_p) - \log p(\lambda_p|\epsilon)] = \frac{\partial}{\partial\lambda_p} \big[\lambda_p|\beta_p| - \log\lambda_p + \epsilon\lambda_p\big] = (|\beta_p|+\epsilon) - \frac{1}{\lambda_p} = 0 \,.
\end{equation}
Whereas the $\epsilon$ term was initially heuristically introduced so as to avoid division by zero, we have provided here a MAP-Bayesian justification.
Additionally, the adaptive Lasso can be viewed as conducting one and a half steps of such a block coordinate descent procedure (two $\boldsymbol\beta$ optimization steps sandwiching a single $\boldsymbol\lambda$ optimization using the hyperparameters $\gamma=1$ and $\epsilon=0$).


\section{Joint Nonsmooth Optimization}\label{sec:optim}

%
This section studies the joint optimization of Problem \ref{eq:problem} via proximal gradient methods. 
We give only brief outlines of proofs in this section; see Appendix \ref{sec:app_proofs} for details.

\subsection{The Variable-Penalty $\ell_1$ Proximal Problem}\label{sec:vsto}

\begin{figure*}
	\centering
	\raisebox{-0.5\height}{\includegraphics[scale=0.58]{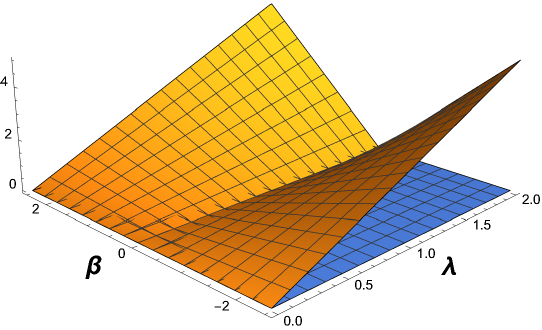}} \,\,
	\raisebox{-0.5\height}{\includegraphics[scale=0.48]{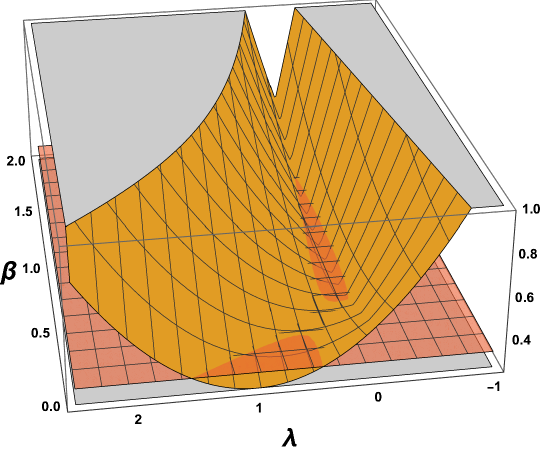}}
	\raisebox{-0.5\height}{\includegraphics[scale=0.48]{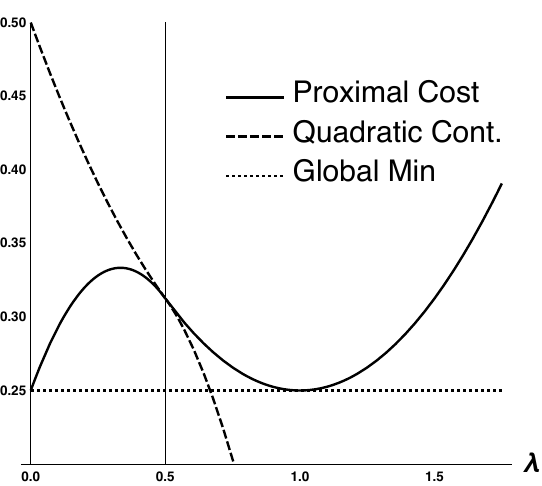}}
	\caption{
	\textit{Left:} The function $g(\beta,\lambda)=\lambda|\beta|$.
	\textit{Subsequently:} The proximal cost of as a function of $\beta$ and $\lambda$ (center)
	and marginal for $\lambda$ (right) with $\lambda_0 = \beta_0=1; s_\lambda=s_\beta = 2$.
	}
	\label{fig:nonconv}
\end{figure*}

Consider the proximal operator of the variable-coefficient $\ell_1$ norm function $g(\b, \boldsymbol\lambda)=\sum_{p=1}^P \lambda_p |\beta_p|$. 
Since this function decomposes into additive functions of each $(\lambda_p$, $\beta_p)$ individually, its proximal operator acts on each block independently of the others. 
Therefore, for the remainder of this section, we consider a single block $(\beta, \lambda)$, dropping the index $p$, and consider the proximal operator of the 2-dimensional function $g(\beta,\lambda) = \lambda |\beta|$.
As mentioned in the introduction, as the $\ell_1$ regularization coefficient $\lambda$ is now being optimized over, the STO is no longer the pertinent proximal operator. 
Indeed, the $\ell_1$ regularization function, considered formally as a function of both $\lambda$ and $\beta$, is nonconvex (see Figure \ref{fig:nonconv}, left), somewhat complicating proximal operator computation.
In fact, many authors, such as \citet{parikh2014proximal}, define the \texttt{prox} operator as one that acts on convex functions.
However, there has been work on extending proximality to larger classes of functions (see e.g. \citet{hare2009computing}) and, of particular interest to the statistical community, development of proximal operators for folded concave penalties such as SCAD, MCP or the bridge penalties $|\beta_p|^q$ for $q\in(0,1)$ \citep{marjanovic2013exact} which are not convex; see also \cite{mosci2010solving,polson2015proximal} for more on proximal methods in statistics and machine learning.
Perhaps because, aside from these discursions, of this focus on convexity (and despite the popularity of the $\ell_1$ norm and adaptive penalty methods), the proximal operator of $\lambda |\beta|$ as an $\mathbb{R}\times\mathbb{R}^+\to\mathbb{R}^+$ function has not to our knowledge been previously examined in the literature.
It turns out that the action of this proximal operator is available in closed form and is single-valued for almost all inputs and always for sufficiently small step sizes $s_\beta$ and $s_\lambda$ such that $s_\beta s_\lambda<1$.
We will assume in this section the overall regularization strength $\tau=1$, since a different $\tau$ simply scales the step sizes.
This yields the following proximal problem:

\begin{align*}\tag{P1}\label{eq:prox_prob}
     \mathrm{prox}^{s_\beta, s_\lambda}_{\lambda|\beta|} (\beta_0, \lambda_0) =
     \underset{x\in\mathbb{R},\lambda\geq0}{\mathrm{argmin}} \,\,  \lambda |\beta|+\frac{(\beta-\beta_0)^2}{2s_\beta} + \frac{(\lambda-\lambda_0)^2}{2s_\lambda} \,\, . & \hspace{5em}
\end{align*}

It will be illuminating to develop the marginal cost for $\lambda$ optimized over $x$.
\begin{lemma}\label{lem:marg}
    The marginal cost of \ref{eq:prox_prob} with respect to $\lambda$ (i.e. with $\beta$ profiled out) is the following piecewise quadratic expression:
\begin{equation}
     \underset{\lambda>0}{\mathrm{argmin}}  \begin{cases}
        \frac{1}{2}(\frac{1}{s_\lambda} - s_\beta)\lambda^2+(|\beta_0|-\frac{\lambda_0}{s_\lambda})\lambda + \frac{\lambda_0^2}{2s_\lambda} & \lambda < \frac{|\beta_0|}{s_\beta} \\
        \frac{(\lambda-\lambda_0)^2}{2s_\lambda}+\frac{\beta_0^2}{2s_\beta} & \lambda \geq \frac{|\beta_0|}{s_\beta}\,\, , \\
     \end{cases}
\end{equation}
where the changepoint $\lambda=\frac{|\beta_0|}{s_\beta}$ is the point where $\lambda$ is just large enough to push $\beta$ to zero.
\end{lemma}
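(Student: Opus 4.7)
The plan is to profile out $\beta$ from \ref{eq:prox_prob} by fixing an arbitrary $\lambda \geq 0$ and solving the inner minimization
\[
    \min_{\beta\in\mathbb{R}} \,\, \lambda|\beta| + \frac{(\beta-\beta_0)^2}{2 s_\beta}
\]
explicitly. This inner problem is precisely the one-dimensional soft-thresholding problem already reviewed in Section \ref{sec:proxbg} (Equation \ref{eq:soft.thresh}) with threshold $s_\beta \lambda$, so its optimizer is $\beta^\star(\lambda) = \operatorname{sgn}(\beta_0)\bigl(|\beta_0| - s_\beta \lambda\bigr)^+$. The changepoint between the two regimes of the marginal cost is the value of $\lambda$ at which $s_\beta \lambda$ first exceeds $|\beta_0|$, namely $\lambda = |\beta_0|/s_\beta$, which is exactly where the lemma splits the cases.

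In the first regime, $\lambda < |\beta_0|/s_\beta$, the inner optimizer is the shrunk value $\beta^\star(\lambda) = \operatorname{sgn}(\beta_0)(|\beta_0| - s_\beta\lambda)$, so $|\beta^\star(\lambda)| = |\beta_0| - s_\beta \lambda$ and, regardless of the sign of $\beta_0$, $(\beta^\star(\lambda)-\beta_0)^2 = s_\beta^2 \lambda^2$. Substituting into the objective, the $\lambda|\beta^\star|$ term contributes $\lambda|\beta_0| - s_\beta \lambda^2$, the $\beta$-quadratic contributes $s_\beta\lambda^2/2$, and the $\lambda$-quadratic is untouched; collecting powers of $\lambda$ yields the claimed quadratic $\tfrac{1}{2}(1/s_\lambda - s_\beta)\lambda^2 + (|\beta_0|-\lambda_0/s_\lambda)\lambda + \lambda_0^2/(2s_\lambda)$.

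In the second regime, $\lambda \geq |\beta_0|/s_\beta$, the inner optimizer is $\beta^\star(\lambda) = 0$, so the $\lambda|\beta|$ term vanishes and the $\beta$-quadratic contributes exactly $\beta_0^2/(2s_\beta)$, which combined with the untouched $\lambda$-quadratic is precisely the second piece of the stated expression. Putting the two pieces together gives the claimed piecewise quadratic marginal, with the interpretation of the changepoint being immediate from the soft-thresholding formula.

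I do not expect any real obstacle here beyond signed bookkeeping: one only needs to check that the identity $(\beta^\star - \beta_0)^2 = s_\beta^2 \lambda^2$ in the first regime holds for both $\beta_0 > 0$ and $\beta_0 < 0$ (the case $\beta_0 = 0$ falls directly into the second regime for every $\lambda \geq 0$). Everything else is substitution and collecting terms, with no convexity assumption required on the joint problem at this stage, since the profiled cost is obtained by a pointwise minimization in $\beta$.
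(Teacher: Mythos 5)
Your proof is correct and follows the same route as the paper's: convert to a nested optimization, plug in the soft-thresholding solution for the inner $\beta$-problem, and substitute back, with the changepoint $\lambda = |\beta_0|/s_\beta$ arising exactly where the threshold kicks in. You simply carry out the case split and the collection of terms more explicitly than the paper's one-line substitution, and the algebra checks out in both regimes.
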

\begin{proof}
    Convert to nested optimization and exploit the known solution for fixed $\lambda$.
\end{proof}
The quadratic polynomial in the interval $[\frac{|\beta_0|}{s_\beta},\infty)$ is always convex. When $s_\lambda s_\beta<1$, the quadratic polynomial in the other interval is convex as is the overall expression. But when $s_\lambda s_\beta>1$, the coefficient of the quadratic term is negative, and that polynomial is concave, yielding a nonconvex piecewise function (see Figure \ref{fig:nonconv}, center and right). 

We are now prepared to develop the proximal operator.


\begin{theorem}
    The optimizing $\lambda$ for the proximal program \ref{eq:prox_prob} is given by, when $s_\beta s_\lambda<1$:
    \begin{equation}\label{eq:prox1}
        \lambda^* =\begin{cases} 
          \lambda_0 & \lambda_0 \geq \frac{|\beta_0|}{s_\beta} \\
          \frac{(\lambda_0-s_\lambda|\beta_0|)^+}{1-s_\lambda s_\beta} & o.w. \,\,\,\, ,
       \end{cases} 
    \end{equation}
    and, when $s_\beta s_\lambda\geq1$, by $\lambda^* = \mathbbm{1}_{\big[\frac{\lambda_0}{\sqrt{s_\lambda}} > \frac{|\beta_0|}{\sqrt{s_\beta}}\big]} \lambda_0$, where $\mathbbm{1}$ denotes the indicator function.
In either case, the optimizing $\beta^*$ is subsequently given by $(|\beta_0|-s_\beta\lambda^*)^+\mathrm{sgn}(\beta_0)$.
\end{theorem}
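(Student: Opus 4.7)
The plan is to build on Lemma \ref{lem:marg}, which reduces \ref{eq:prox_prob} to a one-dimensional minimization over $\lambda \geq 0$ of a piecewise-quadratic function with changepoint at $c = |\beta_0|/s_\beta$. Once $\lambda^*$ is in hand, $\beta^*$ follows immediately by evaluating the standard soft-thresholding formula \ref{eq:soft.thresh} at $\lambda = \lambda^*$, since the inner minimization in $\beta$ for fixed $\lambda$ is exactly the $\ell_1$ proximal problem. The case split in the theorem mirrors the sign of the leading coefficient $\tfrac{1}{2}(1/s_\lambda - s_\beta)$ of the left quadratic piece.

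For the convex regime $s_\beta s_\lambda < 1$, both pieces are strictly convex quadratics. A short algebraic check shows their values and first derivatives agree at the changepoint $c$, so the marginal is $C^1$ with a globally nondecreasing derivative, hence globally convex. I then locate its unique stationary point. The right piece, restricted to $[c, \infty)$, is minimized at $\lambda = \lambda_0$, which lies in the domain precisely when $\lambda_0 \geq c$; this gives the first branch. Otherwise the unconstrained minimizer of the left-piece quadratic, $\lambda^{**} = (\lambda_0 - s_\lambda |\beta_0|)/(1 - s_\lambda s_\beta)$, satisfies $\lambda^{**} < c$ exactly when $\lambda_0 < c$, and the constraint $\lambda \geq 0$ is enforced via the positive part, yielding the second branch.

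The nonconvex regime $s_\beta s_\lambda \geq 1$ is the main obstacle. Here the left piece is concave (or linear in the boundary case), so its minimum over $[0,c]$ must occur at an endpoint, while the right piece remains convex with unconstrained minimizer $\lambda_0$. Thus the candidate set of minimizers reduces to $\{0,\, c,\, \lambda_0 \text{ (if } \lambda_0 \geq c)\}$. I first rule out $\lambda = c$ as a strict optimum: when $\lambda_0 \geq c$ it is trivially dominated by $\lambda_0$, whose right-piece value $\beta_0^2/(2s_\beta)$ is strictly smaller; when $\lambda_0 < c$, comparing the costs at $0$ and $c$ reduces after expansion to $|\beta_0|(1/s_\beta + s_\lambda) \geq 2\lambda_0$, which follows by combining $\lambda_0 < |\beta_0|/s_\beta$ with $s_\lambda \geq 1/s_\beta$. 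This leaves the two-way comparison between $\lambda = 0$, with cost $\lambda_0^2/(2s_\lambda)$, and $\lambda = \lambda_0$, with cost $\beta_0^2/(2s_\beta)$. The inequality $\lambda_0^2/(2s_\lambda) > \beta_0^2/(2s_\beta)$ is exactly the indicator condition $\lambda_0/\sqrt{s_\lambda} > |\beta_0|/\sqrt{s_\beta}$, producing the stated formula; ties at equality explain the "almost all inputs" caveat on single-valuedness.
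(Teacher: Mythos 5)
Your proof is correct and follows essentially the same route as the paper's: reduce to the piecewise quadratic marginal of Lemma \ref{lem:marg}, minimize each piece, and compare. You are in fact slightly more careful than the paper's own argument --- verifying $C^1$ continuity and global convexity when $s_\beta s_\lambda < 1$, and explicitly noting that the candidate $\lambda_0$ is infeasible for the right piece when $\lambda_0 < |\beta_0|/s_\beta$ (in which case the indicator condition is automatically violated) --- but the substance is the same.
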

\begin{proof}
    We need only compare the piecewise optima of the quadratic functions of Lemma \ref{lem:marg}.
\end{proof}


\begin{figure*}[h]
	\centering
    \includegraphics[width=0.24\textwidth]{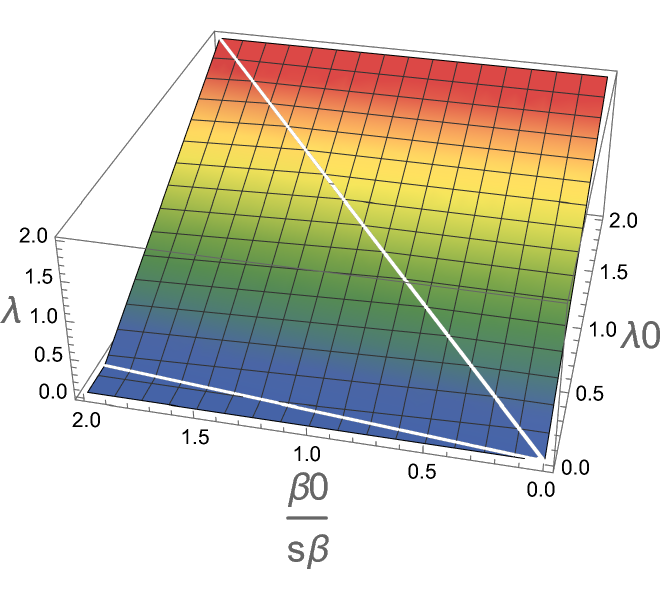}
    \includegraphics[width=0.24\textwidth]{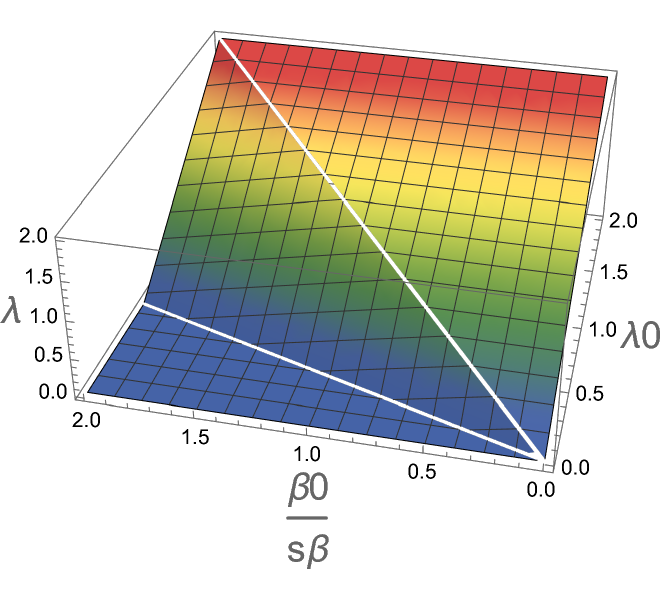}
    \includegraphics[width=0.24\textwidth]{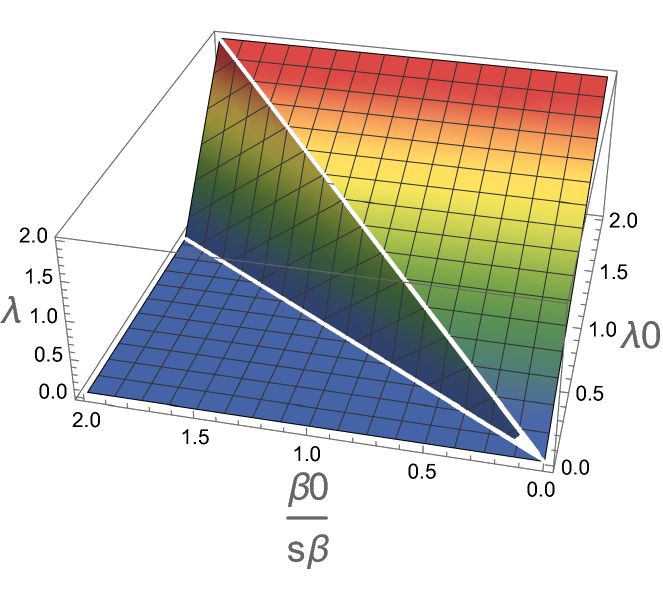}
    \includegraphics[width=0.24\textwidth]{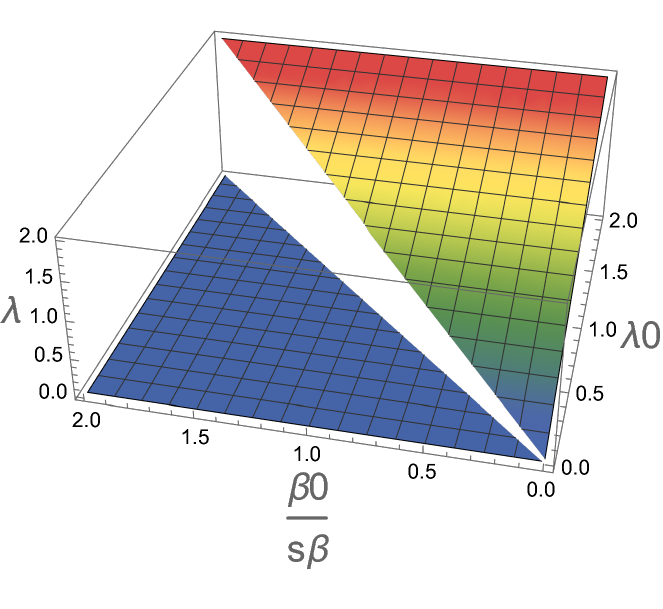}
    \caption{
        \textbf{The Action of the Proximal Operator:}
        Plots of the reduced proximal operator (Eq \ref{eq:reduced_prox}) for various fixed $b:=s_x s_\lambda<1$ and with $\lambda_0,\frac{|x_0|}{s_x} \in (0,2)$.
        Values $b=s_x s_\lambda\in\{0.1,0.35,0.65,0.99\}$ are shown left to right.
    }
    \label{fig:prox_act}

\end{figure*}

Due to the nonconvexity of the proximal cost, this proximal program may have two global optima. 
Thus the proximal operator is discontinuous and multi-valued at the discontinuity, as visualized in the top right of Figure \ref{fig:prox_act}.



\begin{remark}\label{rm:dualsparse}
    When $s_\beta s_\lambda<1$ and $\lambda_0 < s_\lambda |\beta_0|$ or when $s_\beta s_\lambda>1$ and $\frac{\lambda_0}{\sqrt{s_\lambda}}>\frac{|\beta_0|}{\sqrt{s_\beta}}$, the solution to the proximal problem gives $\lambda = 0$, which would lead to no shrinkage on the associated model parameter.
\end{remark}
Remark \ref{rm:dualsparse} is interesting, as it implies that it is possible to develop a procedure with ``dual sparsity": on the regression coefficient, when appropriate, or on the penalty coefficient.
However, in our application of this operator to the Laplace penalty, the $\lambda$ normalization term acts as a logarithmic barrier, precluding that point being a penalized maximizer.

This proximal operator has been conceptualized as a mapping of $(\lambda_0,\beta_0)\to(\lambda^*,\beta^*)$ parameterized by $s_\beta$ and $s_\lambda$, but to visualize it we will briefly study it as a function of these four quantities mapping to an optimizing $\lambda^*$.

\begin{remark}
Since when $s_x s_\lambda>1$ the $\lambda^*$ is either $0$ or $\lambda_0$, we will focus on the case where $s_x s_\lambda<1$. 
Then, let $a:=\frac{|x_0|}{s_x}$ and $b:=s_xs_\lambda$ yielding a function of just three variables:
\begin{equation}\label{eq:reduced_prox}
    \lambda(\lambda_0,a,b) = \begin{cases}
        \lambda_0 & \lambda_0 \geq a \\
        \frac{(\lambda_0-ab)^+}{1-b} & o.w. \,\,\,\,\, .
    \end{cases}
\end{equation}
\end{remark}
This function is visualized in Figure \ref{fig:prox_act}. 
For given step size product $b$ the function is stepwise linear, and converges to the identity mapping with respect to its $\lambda_0$ input as $b\to 0$. 
As $b\to 1$, the mapping becomes more and more steep for $\lambda_0\in(ab,a)$, gradually converging to the discontinuous mapping $\lambda(\lambda_0,a,b)\to \lambda_0 \mathbbm{1}_{\lambda_0 > a}$ .
The proximal operator is multi-valued at $\lambda_0=a$ when $b=1$.


\subsection{The Proximal Operator with Log Term}\label{sec:logprox}

In Equation \ref{eq:problem}, we actually have an additional nonsmooth term aside from the absolute value, namely $-\log\lambda$. 
Because we divide through by $\tau$, we will work with a slightly more general problem allowing an arbitrary coefficient in front of this log term.
In this section we'll thus consider the following proximal problem:
\begin{align*}\tag{P2}\label{eq:log_prox_prob}
     \mathrm{prox}^{s_\beta, s_\lambda}_{\lambda|\beta|+a\log\lambda} (\beta_0, \lambda_0) =
     \underset{\beta\in\mathbb{R},\lambda>0}{\mathrm{argmin}} \,\,  \lambda |\beta| + a \log \lambda + \frac{(\beta-\beta_0)^2}{2s_\beta} + \frac{(\lambda-\lambda_0)^2}{2s_\lambda} \,\, . & \hspace{5em}
\end{align*}

We develop the solution to this problem in the following theorem.

\begin{theorem}
    Consider Problem \ref{eq:log_prox_prob}.
    If $2\beta_0 \leq s_\beta \Big(\lambda_0^2 + \sqrt{\lambda_0^2+4 s_\lambda'}\Big)$, the solution is:
    \begin{equation}
        (\beta^*,\lambda^*) = 
        \Bigg(0,
        \frac{\lambda_0^2 + \sqrt{\lambda_0^2+4 s_\lambda a}}{2}
        \Bigg)\,.
    \end{equation}
    Otherwise, we have
    \begin{equation}
        \lambda^* = \frac{
        s_\lambda|\beta_0| - \lambda_0
        \pm\sqrt{
        (\lambda_0-s_\lambda\beta_0)^2 + 4(1-s_\beta s_\lambda) s_\lambda a
        }
        }{
        2(s_\beta s_\lambda-1)
        }\,
    \end{equation}
    and subsequently 
    $\beta^* = (|\beta_0|-s_\beta\lambda^*)^+\mathrm{sgn}(\beta_0)$.
\end{theorem}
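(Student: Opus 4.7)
The plan is to parallel the strategy used for Problem~\ref{eq:prox_prob}: for any fixed $\lambda > 0$, the only $\beta$-dependent terms in the objective are $\lambda|\beta| + (\beta-\beta_0)^2/(2s_\beta)$, which is exactly the STO problem of Equation~\ref{eq:soft.thresh}. So I would first profile out $\beta$, obtaining $\beta^*(\lambda) = (|\beta_0| - s_\beta\lambda)^+ \mathrm{sgn}(\beta_0)$, and then minimize the resulting piecewise-smooth univariate function of $\lambda$ on the two regimes separated by the threshold $\lambda_\star := |\beta_0|/s_\beta$.

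On the regime $\lambda \geq \lambda_\star$, the soft-threshold gives $\beta^* = 0$, and the cost reduces (up to constants) to $a\log\lambda + (\lambda-\lambda_0)^2/(2s_\lambda)$. Its stationarity condition $a/\lambda + (\lambda - \lambda_0)/s_\lambda = 0$, after multiplication by $\lambda$, becomes a quadratic in $\lambda$ whose positive root is precisely the $\lambda^*$ given in the first branch of the theorem. On the regime $0 < \lambda < \lambda_\star$, back-substituting $\beta^*(\lambda) = (|\beta_0| - s_\beta\lambda)\mathrm{sgn}(\beta_0)$ and expanding yields a cost of the form $-\tfrac{s_\beta}{2}\lambda^2 + |\beta_0|\lambda + a\log\lambda + (\lambda-\lambda_0)^2/(2s_\lambda) + \text{const}$; setting its derivative to zero and again multiplying through by $\lambda$ produces exactly the quadratic $(s_\beta s_\lambda - 1)\lambda^2 - (s_\lambda|\beta_0| - \lambda_0)\lambda - s_\lambda a = 0$, whose two roots give the $\pm$ formula displayed in the second branch.

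To decide which branch is active I would compare $\lambda_\star$ with the candidate from the zero regime. Writing out $\lambda^*_{\text{zero}} \geq \lambda_\star$ in terms of $|\beta_0|$, $\lambda_0$, $s_\beta$ and $s_\lambda a$ yields precisely the inequality serving as the hypothesis of the first case in the statement: under that inequality the zero-$\beta$ branch is feasible and optimal, and otherwise one falls into the active branch. The $\pm$ in the second formula is then pinned down by the requirements that $\lambda^* > 0$ and that $\lambda^* < \lambda_\star$ so as to lie in its regime.

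The main obstacle is the residual nonconvexity already encountered in Problem~\ref{eq:prox_prob}: the marginal cost in $\lambda$ is not globally convex, so stationary points are not automatically global minimizers. I would have to verify that the chosen root on each regime is indeed the minimizer there, using that $a\log\lambda$ acts as a barrier at $\lambda = 0^+$ when $a < 0$ and checking the second-derivative sign on each interval; and I would have to check that the zero-branch and nonzero-branch candidates cannot both be global minima, i.e.\ that the threshold inequality really carves out a bona fide dichotomy. A subsidiary technicality is the boundary $s_\beta s_\lambda = 1$, where the nonzero-branch quadratic degenerates to a linear equation and must be handled separately, in direct analogy with the $s_\beta s_\lambda = 1$ case in the preceding theorem.
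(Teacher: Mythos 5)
Your route is correct and arrives at the same equations as the paper, but it is organized differently: you profile out $\beta$ via the soft-thresholding operator and minimize the resulting piecewise univariate marginal cost in $\lambda$ over the two regimes split at $\lambda_\star=|\beta_0|/s_\beta$ (i.e., you transplant the Lemma~\ref{lem:marg} strategy from Problem~\ref{eq:prox_prob} to Problem~\ref{eq:log_prox_prob}), whereas the paper works directly with the joint first-order (subdifferential) conditions in $(\beta,\lambda)$ and case-splits on $\beta^*=0$ versus $\beta^*\neq 0$. The two regimes coincide exactly ($\lambda\geq\lambda_\star$ iff the soft threshold kills $\beta$), so the computations are interchangeable; your version buys a cleaner justification of the threshold inequality (it is literally the feasibility condition $\lambda^*_{\text{zero}}\geq\lambda_\star$), while the paper's version is shorter. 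The "obstacles" you flag --- selecting the correct root among the $\pm$ pair, ruling out ties between the two branches, and the degenerate case $s_\beta s_\lambda=1$ --- are genuine, but the paper does not discharge them analytically either: it resolves the $\pm$ ambiguity by evaluating the proximal cost at both roots numerically, so you are not missing anything the paper supplies. One bookkeeping caution: the sign of the constant term in your nonzero-regime quadratic (you wrote $-s_\lambda a$ where $+s_\lambda a$ is needed to reproduce the discriminant $(\lambda_0-s_\lambda\beta_0)^2+4(1-s_\beta s_\lambda)s_\lambda a$ in the statement) tracks whether the log term enters as $+a\log\lambda$ or $-a\log\lambda$; the paper is itself inconsistent on this point between the statement of \ref{eq:log_prox_prob} and its appendix restatement, and the displayed formulas correspond to the $-a\log\lambda$ convention, so fix your derivative of the log term accordingly and the roots match.
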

\begin{proof}
    See Appendix \ref{sec:app_proofs}.
\end{proof}
We implement this proximal operator by computing both the positive and negative branches and then comparing the value of the proximal cost at each.
Conceptually, using this proximal operator means our surrogate precisely handles the log term, which is desirable.
But there is also an important practical gain: a negative $\lambda$ input to the proximal operator will come out strictly positive (instead of at zero, as in the prox without the $\log$ term), meaning that we can use an optimization method that does not respect the positivity of $\lambda$, such as vanilla gradient descent, and still rest assured that $\lambda$ will be strictly positive at each iterate after the proximal step.
For this reason, we use this proximal operator in our numerical experiments of Section \ref{sec:applications}.

\subsection{Incorporation into Proximal Gradient Methods}

The proximal operator investigated in Sections \ref{sec:vsto} and \ref{sec:logprox} can be deployed as part of a simple proximal gradient descent algorithm in our modeling context as follows, given some initial guesses $\boldsymbol\beta^1$ and $\boldsymbol\lambda^1$:

\begin{algorithm}[H]
    \For{$t\in\{1,\ldots,T\}$} {
         $\tilde{\boldsymbol\beta}^{t+1} \gets \boldsymbol\beta^t - s_{\boldsymbol\beta} \nabla_{\boldsymbol\beta} \mathcal{L}(\boldsymbol\beta^t, \boldsymbol\lambda^t)$ \\
         $\tilde{\boldsymbol\lambda}^{t+1} \gets \boldsymbol\lambda^t - s_{\boldsymbol\lambda} \nabla_{\boldsymbol\lambda} [\mathcal{L}(\boldsymbol\beta^t, \boldsymbol\lambda^t)-\log P_{\boldsymbol\lambda}(\boldsymbol\lambda^t)]$ \\
         $\boldsymbol\beta^{t+1},\boldsymbol\lambda^{t+1} \gets \mathrm{prox}_{\lambda|\beta|-\frac{1}{\tau}\log\lambda}^{s_{\boldsymbol\lambda},s_{\boldsymbol\beta}}(\tilde{\boldsymbol\beta}^{t+1},\tilde{\boldsymbol\lambda}^{t+1})$
    }
\end{algorithm}
Step sizes will need to be chosen carefully, as we discuss next.

\subsubsection{Convergence}


Since these proximal operators arise from nonconvex regularization terms, some standard convergence results for proximal gradient descent do not apply, as they typically assume the regularizer to be convex (see e.g. \cite{bauschke2011convex}).   
However, Theorem 5 of \citet{li2015global} applies to the proposed proximal iteration.
For convenience, we reproduce the theorem below using our notation from Equation \ref{eq:auglik} with added expository text and a reformatting for clarity:

\begin{theorem}[\citet{li2015global}]
    Suppose there exists a twice continuously differentiable convex function $q$ such that $-l\mathbf{I} \leq \nabla^2 h(\x) + \nabla^2 q(\x) \leq l\mathbf{I}$, that is, that the sum of the hessians of the smooth part of our objective $h$ and the postulated function $q$ has spectrum lying within $[-l,l]$ for any $\x$ in the domain.
    Then proximal gradient descent with a step size less than $\frac{1}{l}$:
    \begin{enumerate}
        \item Is a descent algorithm.
        \item Has cluster points only at stationary points.
    \end{enumerate}
\end{theorem}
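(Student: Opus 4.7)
The plan is to combine a descent-lemma bound for the auxiliary $l$-smooth function $\tilde h := h + q$ with the optimality inequality of the proximal step, letting the convexity of $q$ absorb the cross-terms so that a clean sufficient-decrease property falls out for $c = h + g$ itself. Concretely, the hypothesis $-l\mathbf{I}\le \nabla^2 \tilde h \le l\mathbf{I}$ says $\tilde h$ has $l$-Lipschitz gradient, so the classical quadratic upper bound gives
\[
\tilde h(\x^{k+1}) \le \tilde h(\x^k) + \langle \nabla h(\x^k)+\nabla q(\x^k),\, \x^{k+1}-\x^k\rangle + \tfrac{l}{2}\|\x^{k+1}-\x^k\|^2 .
\]
Since $\x^{k+1}$ minimizes $g(\mathbf{u}) + \tfrac{1}{2s}\|\mathbf{u} - \x^k + s\nabla h(\x^k)\|^2$ over $\mathbf{u}$, comparing its value against the trial point $\mathbf{u} = \x^k$ yields
\[
g(\x^{k+1}) + \langle \nabla h(\x^k),\, \x^{k+1}-\x^k\rangle + \tfrac{1}{2s}\|\x^{k+1}-\x^k\|^2 \le g(\x^k) .
\]
Adding the two displays, the $\nabla h$ cross-terms cancel, and convexity of $q$ supplies $q(\x^k) - q(\x^{k+1}) + \langle \nabla q(\x^k),\, \x^{k+1}-\x^k\rangle \le 0$, which eliminates the remaining $\nabla q$ cross-term along with the change in $q$. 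What remains is
\[
c(\x^{k+1}) \le c(\x^k) + \bigl(\tfrac{l}{2}-\tfrac{1}{2s}\bigr)\|\x^{k+1}-\x^k\|^2 ,
\]
and this coefficient is strictly negative precisely when $s < 1/l$, proving the descent assertion (1).

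For (2), I will telescope this inequality. Assuming $c$ is bounded below (else there is nothing to prove), summability forces $\sum_k\|\x^{k+1}-\x^k\|^2 < \infty$, hence $\|\x^{k+1}-\x^k\|\to 0$. Along any convergent subsequence $\x^{k_j}\to\x^*$ we therefore also have $\x^{k_j+1}\to\x^*$. The first-order optimality of each proximal step reads $0 \in \nabla h(\x^{k_j}) + s^{-1}(\x^{k_j+1}-\x^{k_j}) + \partial g(\x^{k_j+1})$, where $\partial g$ denotes the limiting (Mordukhovich) subdifferential, since $g$ need not be convex. Taking $j\to\infty$ and invoking continuity of $\nabla h$ together with graph-closedness of $\partial g$ yields $0\in\nabla h(\x^*) + \partial g(\x^*)$, which is the required stationarity condition for $c$.

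The main obstacle is the limit passage in the subdifferential inclusion. Graph-closedness of the limiting subdifferential does not just demand $\x^{k_j+1}\to\x^*$ but also $g(\x^{k_j+1})\to g(\x^*)$, and for a merely lower-semicontinuous nonconvex $g$ this is not automatic. The standard remedy is to substitute the test point $\x^*$ into the proximal-value inequality at step $k_j$ to obtain $\limsup_j g(\x^{k_j+1})\le g(\x^*)$, then combine with the reverse inequality from lower semicontinuity. A secondary caveat is that cluster points need not exist unless $c$ has bounded sublevel sets; this is usually folded into the hypotheses, so I would state it explicitly when transplanting the result into our adaptive-Lasso setting where coercivity is guaranteed by the $\log\lambda$ barrier combined with the $\ell_1$ term.
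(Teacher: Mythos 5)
Your argument is correct, but note that the paper itself offers no proof of this statement: it is reproduced verbatim (with notation changed) from Theorem 5 of \citet{li2015global} and the authors defer entirely to that reference, so there is no in-paper derivation to compare against. On its own merits, your proof is a clean rendition of the standard sufficient-decrease argument for nonconvex proximal gradient: the descent lemma applied to $\tilde h = h+q$, the prox-optimality inequality tested at $\x^k$, and the convexity of $q$ absorbing the $\nabla q$ cross-term together with the change in $q$ -- this last step is exactly the point of phrasing the hypothesis through an auxiliary convex $q$ rather than demanding $\nabla h$ itself be $l$-Lipschitz, and you have identified it correctly. Your treatment of part (2) is also the right one, and you deserve credit for flagging the two genuine subtleties that a careless proof would miss: the limit passage in the subdifferential inclusion requires $g(\x^{k_j+1})\to g(\x^*)$ (which you recover via the prox-value inequality at the test point $\x^*$ plus lower semicontinuity), and cluster points need not exist absent coercivity -- a caveat the paper itself acknowledges in the surrounding text. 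Two minor points worth making explicit if this were written out in full: the prox map for a nonconvex $g$ must be assumed well defined (nonempty argmin), which holds here since $g$ is lower semicontinuous and bounded below on its domain; and the Fermat rule you invoke, $0\in\nabla\phi(\x^{k+1})+\partial g(\x^{k+1})$ for the smooth-plus-nonsmooth prox objective, is the sum rule for the limiting subdifferential with one smooth summand, which is legitimate. Neither gap is substantive.
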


This theorem does not imply that our algorithm will indeed have cluster points.
But if we do observe one, we know that it's also a stationary point of a sufficiently well-behaved objective.
We suspect that more precise statements could be made about the convex-analytic properties of these two new proximal operators and their implications for proximal gradient descent, which we leave as future work.

\subsubsection{Application to Large Finite Sum Problems}

A benefit of the proximal gradient framework is its modularity.
In our numerical experiments, we will consider some larger datasets for which stochastic methods can be advantageous.
In particular, we will use a stochastic proximal gradient method \citep{rosasco2014convergence} with variance reduction by way of SVRG \citep{johnson2013accelerating,xiao2014proximal}. 
Briefly, SVRG allows us to convert a stochastic finite sum problem into an increasingly deterministic one as iterations progress, with the cost of double the memory and triple the computation (per epoch) by comparison with standard stochastic gradient methods; see e.g. \cite{sebbouh2019towards}.

\section{Statistical Properties}\label{sec:stats}

In this section, we state our results about the asymptotic behavior of the procedure with an unstructured prior; see Appendix \ref{sec:asymp_appendix} for the corresponding proofs.
We will assume that the prior on $\boldsymbol\lambda$ is independent and that the misfit term $\mathcal{L}(\bb)$ is given by a negative loglikelihood $-l(\y;\bb)$.
We will assume all parameters are penalized for presentation purposes but this does not impose any serious constraints.
In this setting, we can rewrite our penalty as such:
\begin{align}
    & \underset{\boldsymbol\beta,\boldsymbol\lambda>\mathbf{0}}{\min}\,\, -l(\boldsymbol\beta) + \sum_{p=1}^P \big[\tau\lambda_p |\beta_p| - \log\lambda_p\big] + \sum_{p=1}^P -\log p_\lambda (\lambda_p) \\
    & \iff \underset{\boldsymbol\beta}{\min}\,\, -l(\boldsymbol\beta) + \sum_{p=1}^P \underset{\lambda_p>0}{\min} \big[\tau\lambda_p |\beta_p| - \log\lambda_p  -\log p_\lambda (\lambda_p) \big]  \label{eq:penlik} \, .
\end{align}
Therefore, we may profile over $\lambda$ to develop a penalty $g_{\tau}(|\beta|) := \underset{\lambda>\mathbf{0}}{\min} \, \tau\lambda |\beta| - \log\lambda  + \rho(\lambda)$, where $\rho(\lambda) := -\log p_\lambda (\lambda)$.
We will build off of the results of \cite{fan2001variable}.
However, one complication in this article is that we do not have a direct expression for the penalty function; rather, it is defined implicitly as a solution to an optimization problem given $|\beta|$. 
We thus need to establish some basic properties of the penalty before establishing any statistical results.
\begin{lemma}\label{lem:lam_props}
	The following hold, where $\lambda^*$ denotes the optimizing $\lambda$, and is formally a function of $\tau$ and $|\beta|$:
 \begin{multicols}{2}
	\begin{enumerate}
		\item $\lambda^* = \frac{1}{\tau|\beta| + \rho'(\lambda^*)}$.
		\item $\frac{\partial \lambda^*}{\partial |\beta|} = -\frac{\tau}{\frac{1}{\lambda^{*2}} + \rho''(\lambda)}$.
		\item $g_{\tau_n}'(|\beta|) =  \tau \lambda^*$.
		\item $g_{\tau_n}''(|\beta|) =  -\frac{\tau_n^2}{(\tau_n+\rho'(\lambda^*))^2+\rho''(\lambda^*)}$.
	\end{enumerate}
 \end{multicols}
\end{lemma}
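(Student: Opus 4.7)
The plan is to view $\lambda^*$ as implicitly defined by the first-order stationarity condition of the inner minimization and derive each of the four identities in sequence, using only elementary calculus and the envelope theorem. Throughout, I assume enough regularity on $\rho$ (e.g.\ twice continuous differentiability and strict convexity of the inner objective near the optimum) that $\lambda^*$ is a well-defined, smooth function of $|\beta|$ and the implicit function theorem applies.

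First I would write the inner objective as $h(\lambda;|\beta|) := \tau\lambda|\beta| - \log\lambda + \rho(\lambda)$ and compute $\partial h/\partial\lambda = \tau|\beta| - 1/\lambda + \rho'(\lambda)$. Setting this equal to zero at $\lambda=\lambda^*$ and isolating the $1/\lambda$ term gives item 1 directly: $\lambda^* = 1/(\tau|\beta| + \rho'(\lambda^*))$. For item 2, I differentiate the stationarity relation $\tau|\beta| - 1/\lambda^* + \rho'(\lambda^*) = 0$ implicitly with respect to $|\beta|$, obtaining
\begin{equation*}
\tau + \Bigl(\tfrac{1}{\lambda^{*2}} + \rho''(\lambda^*)\Bigr)\,\frac{\partial \lambda^*}{\partial |\beta|} = 0,
\end{equation*}
which rearranges to the claimed formula. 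The positivity of $1/\lambda^{*2} + \rho''(\lambda^*)$ is exactly the second-order sufficient condition guaranteeing $\lambda^*$ is a local minimizer, so the implicit function step is justified.

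For item 3, I would apply the envelope theorem. Since $g_\tau(|\beta|) = h(\lambda^*(|\beta|);|\beta|)$ and $\partial h/\partial\lambda$ vanishes at $\lambda^*$, differentiating through $|\beta|$ leaves only the explicit $|\beta|$-dependence:
\begin{equation*}
g_\tau'(|\beta|) = \frac{\partial h}{\partial |\beta|}\bigg|_{\lambda=\lambda^*} = \tau\lambda^*.
\end{equation*}
Item 4 then follows by differentiating item 3 once more and substituting item 2: $g_\tau''(|\beta|) = \tau\,\partial\lambda^*/\partial|\beta| = -\tau^2/(1/\lambda^{*2} + \rho''(\lambda^*))$. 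To match the form in the lemma, I would eliminate $1/\lambda^{*2}$ via item 1 by squaring $1/\lambda^* = \tau|\beta| + \rho'(\lambda^*)$, so the denominator becomes $(\tau|\beta| + \rho'(\lambda^*))^2 + \rho''(\lambda^*)$, matching the stated expression (modulo what appears to be a dropped $|\beta|$ factor inside the square in the paper's statement).

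There is no significant obstacle here; the only subtlety is ensuring that $\lambda^*$ is uniquely defined and smooth as a function of $|\beta|$, which is why regularity of $\rho$ must be invoked at the outset. All four identities then cascade from items 1 and 2 via a straightforward envelope-theorem argument, with items 3 and 4 being essentially the first- and second-order Danskin formulas adapted to this one-dimensional inner problem.
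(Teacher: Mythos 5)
Your proof is correct and follows essentially the same route as the paper's: the interior first-order condition for item 1, implicit differentiation for item 2, the envelope theorem (explicitly writing out and cancelling the $\frac{\partial\lambda^*}{\partial|\beta|}\cdot\frac{\partial c^p}{\partial\lambda}\big\vert_{\lambda^*}$ term) for item 3, and differentiation plus substitution of items 1 and 2 for item 4. Your side observation is also right: substituting $\frac{1}{\lambda^*}=\tau|\beta|+\rho'(\lambda^*)$ yields a denominator of $(\tau|\beta|+\rho'(\lambda^*))^2+\rho''(\lambda^*)$, so the $(\tau_n+\rho'(\lambda^*))^2$ appearing in the lemma statement is indeed missing a $|\beta|$ factor.
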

\begin{proof}
    These follow from implicit differentiation on first order optimality conditions.
\end{proof}
This allows us to quantify the behavior of this penalty as follows:
\begin{theorem}
    Assume that the logarithmic derivative of the hyperprior density on $\lambda$ is bounded (i.e. $|\rho'(\lambda)|< M_1 \,\, \forall \lambda\geq0$) and that the density is decreasing on $(0,\infty)$. Then:
    \begin{enumerate}
        \item $g'_{\tau}(|\beta|)\approx\frac{1}{|\beta|}$ for large $\beta$.
        \item The minimum of $|\beta|+g'_{\tau}(|\beta|)$ is achieved at $\beta=0$ with value $\lambda_a\tau$.
    \end{enumerate} 
\end{theorem}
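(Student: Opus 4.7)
The plan is to prove each of the two conclusions in turn by working directly from the implicit characterizations of $\lambda^*$ given in Lemma~\ref{lem:lam_props}.

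For the first claim, I would combine items~1 and~3 of the lemma to write
\begin{equation*}
g'_{\tau}(|\beta|) = \tau\lambda^* = \frac{\tau}{\tau|\beta| + \rho'(\lambda^*)}.
\end{equation*}
The hypothesis $|\rho'(\lambda^*)| < M_1$ then forces the denominator to equal $\tau|\beta|(1+o(1))$ as $|\beta|\to\infty$, so $g'_\tau(|\beta|) = 1/|\beta| + O(|\beta|^{-2})$. This is essentially a one-line calculation.

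For the second claim, I would first pin down the value at $\beta=0$: setting $|\beta|=0$ in item~1 of the lemma gives $\lambda^*(0)\,\rho'(\lambda^*(0))=1$, and denoting this common value $\lambda_a$ yields $g'_\tau(0)=\tau\lambda_a$, matching the claimed extremal value. To show it is actually a minimum on $[0,\infty)$, I would try to prove that $h(t):=t+g'_\tau(t)$ is nondecreasing, i.e.\ that $h'(t)=1+g''_\tau(t)\geq 0$. Substituting item~4 of the lemma, and using item~1 to rewrite $\tau t+\rho'(\lambda^*) = 1/\lambda^*$, this reduces to
\begin{equation*}
\frac{1}{\lambda^{*2}} + \rho''(\lambda^*) \geq \tau^2.
\end{equation*}
Item~2 shows that $\lambda^*$ is monotonically decreasing in $|\beta|$ whenever $\rho''\geq 0$, so $1/\lambda^{*2}$ is smallest at $|\beta|=0$; the inequality is therefore tightest at the origin and reduces to $\rho'(\lambda_a)^2 + \rho''(\lambda_a) \geq \tau^2$.

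The main obstacle is cleanly establishing this last inequality. The stated hypotheses control $\rho'$ only from above and say nothing directly about $\rho''$, so some additional structure appears necessary — most naturally log-concavity of the hyperprior (giving $\rho''\geq 0$) combined with the shrinking-regularization regime $\tau_n\to 0$ customarily used in oracle-property arguments \citep{fan2001variable}. Once the inequality is secured at the origin, the monotonicity argument above propagates it to all $|\beta|\geq 0$ and completes the claim.
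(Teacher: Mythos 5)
Your treatment of part~1 is correct and is essentially the paper's own argument: combine items~1 and~3 of Lemma~\ref{lem:lam_props} to get $g'_\tau(|\beta|)=\tau\lambda^*=\tau/(\tau|\beta|+\rho'(\lambda^*))$ and invoke the bound on $\rho'$ (the paper's proof writes $\rho(\lambda^*)$ in the denominator, an evident typo for $\rho'(\lambda^*)$). Nothing further to say there.

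For part~2 you have not closed the argument, but the obstacle you isolate is genuine, and it points at a gap in the paper's own proof rather than at a defect in your approach. The paper defines $\lambda_a$ by $1/\lambda_a=\rho'(\lambda_a)$ and then asserts that ``each term of $|\beta|+\lambda^*\tau$ is decreasing in $|\beta|$ individually,'' which is false for the term $|\beta|$ itself (and its claim $\lambda_a\le\lambda^*$ is backwards: item~2 of the lemma makes $\lambda^*$ decreasing in $|\beta|$ from $\lambda^*(0)=\lambda_a$, so $\lambda^*\le\lambda_a$); the sign of $1+g''_\tau$ is never checked. Your reduction is exactly the missing step: the origin is the minimizer iff $1/\lambda^{*2}+\rho''(\lambda^*)\ge\tau^2$ for all $|\beta|$, which (when $\rho''\ge 0$) is tightest at $|\beta|=0$, where it reads $\rho'(\lambda_a)^2+\rho''(\lambda_a)\ge\tau^2$. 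Under the stated hypotheses this can fail: with $|\rho'|<M_1$ and $|\rho''|<M_2$, the derivative of $|\beta|+g'_\tau(|\beta|)$ at $0^+$ equals $1-\tau^2/(\rho'(\lambda_a)^2+\rho''(\lambda_a))$, which is negative once $\tau^2>M_1^2+M_2$ --- precisely the regime of Theorem~\ref{thm:oracle}, where $\tau_n=n\tau_0\to\infty$ --- so the minimizer moves off the origin. Some additional hypothesis of the kind you suggest (a lower bound on $\rho''$ together with a restriction on $\tau$, or a weakening of the claim to the statement that the value at the origin is $\tau\lambda_a$ and that the minimum stays bounded away from zero, which is what the sparsity argument actually uses) is needed; you were right not to paper over this. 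One further small point: uniqueness of $\lambda_a$ requires $\rho'$ nondecreasing (log-concavity of the hyperprior), not merely $\rho$ increasing as the paper asserts, which again supports the extra structure you propose.
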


\begin{remark}
    $g'_{\tau}(|\beta|)\approx\frac{1}{|\beta|}$ approximates the gradient of the adaptive Lasso \citep{zou2006adaptive} procedure with optimal weights with hyperparameter $\gamma=1$ as well as the iteration described by \citet{candes2008enhancing}.
\end{remark}
\begin{remark}
    \citet{fan2001variable} describe three desirable properties of nonconcave penalties: first, that they be unbiased for large parameters, second, that they induce sparsity, and third, that they be continuous in the data. 
    Unlike the penalty they study, ours is not unbiased for large parameters, but is simply asymptotically unbiased.
    The fact that the minimum of $|\beta|+g'_{\tau}(|\beta|)$ is strictly positive ensures sparsity, while the fact that the minimum occurs at zero ensures continuity. This latter condition is not satisfied by, for example, the bridge penalty.
\end{remark}
\begin{remark}
Bounded logarithmic derivatives are satisfied by the densities of, for example, the Cauchy and Exponential distributions, but not the Gaussian distribution.
\end{remark}

We next consider the asymptotic distribution of the penalized likelihood estimator.
In particular, we find that there exists a local minimizer of the penalized loss which satisfies the oracle property of \citet{fan2001variable}.
Without loss of generality, we assume that it is the first $r$ entries of the true parameter vector $\tb$ which are nonzero, and the rest 0, such that $\tb={\tiny\begin{bmatrix} \tb_{1}\\ \tb_{2}\end{bmatrix}=\begin{bmatrix}\tb_{1}\\\mathbf{0}\end{bmatrix}}$.
\begin{theorem}
    Let $\tau_n$ be some linearly increasing sequence, and further assume that $|\rho''(|\lambda|)|<M_2$ (bounded second logarithmic derivative). Then, under the standard regularity conditions on the likelihood enumerated in the supplementary material, there is a local minimum of $\ref{eq:penlik}$ that satisfies the following:
    \begin{enumerate}
        \item For any $\epsilon>0$, $P(\hat{\bb}_2=\mathbf{0})>1-\epsilon$ as $N\to\infty$.
        \item $\sqrt{N}(\hat{\bb}_1-\tb_1) \overset{P}{\to} N\big(\mathbf{0},I^{-1}(\tb_1)\big)$.
    \end{enumerate}\label{thm:oracle}
\end{theorem}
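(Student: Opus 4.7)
The plan is to follow the two-stage argument of \citet{fan2001variable}, adapted to our setting in which the penalty $g_{\tau_n}$ is defined implicitly and its derivative information comes from Lemma \ref{lem:lam_props}. Throughout I would write $Q_n(\b) = -l(\b) + \sum_p g_{\tau_n}(|\beta_p|)$ and rely on two quantitative facts derived from the lemma together with $\tau_n = \Theta(n)$ and the boundedness assumptions on $\rho'$ and $\rho''$. First, for any $\beta$ bounded away from zero, $g_{\tau_n}'(|\beta|) = \tau_n \lambda^* \to 1/|\beta|$ and $g_{\tau_n}''(|\beta|) \to -1$, so both are $O(1)$ uniformly on compacts not containing zero. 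Second, at $\beta = 0$ the one-sided derivative equals $\tau_n \lambda_a$, the quantity identified in the preceding theorem, and so $g_{\tau_n}'(|\beta|)/\sqrt n \to \infty$ whenever $|\beta|$ lies in a shrinking neighborhood of the origin.

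\textbf{Step 1 ($\sqrt n$-consistency).} Following the standard device, I would show that for any $\epsilon>0$ there is a $C$ with
\[ P\!\left(\inf_{\|u\|=C}\, Q_n(\tb + u/\sqrt n) > Q_n(\tb)\right)\, \geq\, 1-\epsilon, \]
which guarantees a local minimum within the $C/\sqrt n$ ball. Expanding the likelihood contributes a score term of order $O_p(\|u\|)$ and a quadratic $\tfrac12 u^\top I(\tb) u$. For $p\leq r$ the penalty differences are controlled by a Taylor expansion using the $O(1)$ bounds on $g_{\tau_n}'$ and $g_{\tau_n}''$ at $\tilde\beta_p$, contributing only $O(\|u\|/\sqrt n)+O(\|u\|^2/n) = o(1)$. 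For $p>r$, monotonicity of $g_{\tau_n}$ in $|\beta|$ gives $g_{\tau_n}(|u_p|/\sqrt n)\geq g_{\tau_n}(0)$, so these terms only help. Hence for large $C$ the positive-definite quadratic dominates.

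\textbf{Step 2 (sparsity).} Let $\hat{\b}$ denote the root-$n$ consistent local minimum just produced. I would argue that $\hat{\b}_2 = \mathbf{0}$ with probability tending to one by contradiction. Suppose $\hat\beta_j \neq 0$ for some $j>r$. The first-order condition reads $-\partial l/\partial \beta_j + \sgn(\hat\beta_j)\, g_{\tau_n}'(|\hat\beta_j|) = 0$. Since $|\hat\beta_j| = O_p(n^{-1/2})$, continuity of $\lambda^*$ at the origin (read off from Lemma \ref{lem:lam_props}(1)--(2) using bounded $\rho',\rho''$) yields $g_{\tau_n}'(|\hat\beta_j|) = \tau_n \lambda_a(1+o_p(1))$, which is of order $\tau_n = \Theta(n)$. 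On the other hand a Taylor expansion of the score around $\tb$, combined with $\nabla l(\tb)=O_p(\sqrt n)$ and $\nabla^2 l(\tb) = O_p(n)$, gives $|\partial l/\partial\beta_j| = O_p(\sqrt n)$. Because $\tau_n/\sqrt n \to \infty$, the penalty gradient strictly dominates and the stationarity condition cannot be satisfied, a contradiction.

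\textbf{Step 3 (asymptotic normality).} Restricting to the event $\{\hat{\b}_2 = \mathbf{0}\}$, the stationary condition for the active block is $\nabla_{\b_1} l(\hat{\b}_1,\mathbf{0}) = \psi(\hat{\b}_1)$, where $\psi_j(\hat{\b}_1) = \sgn(\hat\beta_{1,j})\, g_{\tau_n}'(|\hat\beta_{1,j}|) = O(1)$ by the first bullet above. Expanding the score to first order about $\tb_1$ gives
\[ \tfrac{1}{\sqrt n}\nabla l(\tb_1,\mathbf{0}) + \left[\tfrac{1}{n}\nabla^2 l(\tb_1,\mathbf{0}) + o_p(1)\right]\sqrt n (\hat{\b}_1-\tb_1) = \tfrac{1}{\sqrt n}\psi(\hat{\b}_1) = o_p(1). \]
Standard regularity (score CLT and Hessian LLN) gives $n^{-1/2}\nabla l(\tb_1,\mathbf{0}) \Rightarrow \mathcal{N}(\mathbf{0}, I(\tb_1))$ and $-n^{-1}\nabla^2 l(\tb_1,\mathbf{0}) \to I(\tb_1)$, whence inversion and Slutsky deliver $\sqrt n(\hat{\b}_1-\tb_1) \Rightarrow \mathcal{N}(\mathbf{0}, I^{-1}(\tb_1))$.

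The main obstacle is Step 2. The sparsity argument hinges on a clean uniform lower bound $g_{\tau_n}'(|\beta|)\geq \tau_n \lambda_a (1+o(1))$ over the shrinking $O_p(n^{-1/2})$ ball around zero, and extracting this from the implicit relation $\lambda^* = 1/(\tau|\beta|+\rho'(\lambda^*))$ requires a dedicated small lemma showing that $\lambda^*(|\beta|)$ remains bounded away from zero as $|\beta|\to 0$; bounded $\rho'$ is precisely what prevents $\lambda^*$ from collapsing. The rest of the argument is routine once this uniform bound is in hand.
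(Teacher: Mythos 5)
Your Steps 1 and 3 are sound and essentially mirror the paper's argument, but Step 2 contains a genuine gap, and it sits exactly at the point where the paper says its proof must deviate from \citet{fan2001variable}. You claim that for $|\hat\beta_j|=O_p(n^{-1/2})$ one has $g_{\tau_n}'(|\hat\beta_j|)=\tau_n\lambda_a(1+o_p(1))=\Theta(n)$, justified by ``continuity of $\lambda^*$ at the origin.'' That continuity holds for a \emph{fixed} $\tau$, but here $\tau_n\to\infty$ simultaneously: by Lemma \ref{lem:lam_props}(1) together with $|\rho'|\leq M_1$, $\lambda^* = 1/(\tau_n|\beta|+\rho'(\lambda^*))$ collapses to zero as soon as the product $\tau_n|\beta|$ diverges. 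On the scale $|\beta|\asymp n^{-1/2}$ this product is $\asymp\sqrt{n}$, whence $g_{\tau_n}'(|\beta|)=\tau_n\lambda^*\asymp\sqrt{n}$ --- the \emph{same} order as the score, not larger. The domination $\tau_n/\sqrt n\to\infty$ you invoke therefore never materializes in the annulus $1/n\ll|\beta|\lesssim 1/\sqrt n$, and the contradiction argument cannot rule out stationary points there. The ``dedicated small lemma'' you flag at the end (that $\lambda^*$ stays bounded away from zero uniformly over the $n^{-1/2}$ ball) is false, so the obstacle you correctly identify is not merely technical: the penalty here overpowers the likelihood only within an $O(1/N)$ neighborhood of the origin, not an $O(1/\sqrt N)$ one.

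The paper's remedy is to abandon the attempt to show that the $\sqrt n$-consistent minimizer produced in Step 1 has $\hb_2=\mathbf{0}$, and instead to \emph{construct} the candidate directly: first clamp $\hb_2=\mathbf{0}$ and obtain a $\sqrt n$-consistent minimizer of the restricted problem (so the Step 1 perturbation runs only over the first block), then certify local minimality in the $\bb_2$ directions by verifying the sign condition $\sgn(\partial Q/\partial\beta_p)=\sgn(\beta_p)$ only on the much smaller interval $0<|\beta_p|<C/N$, where $|g_{\tau_n}'(|\beta_p|)|\geq N/C$ genuinely dominates the $O_p(\sqrt n)$ score. Since the theorem asserts only the existence of \emph{a} local minimum with the oracle property, this smaller neighborhood suffices. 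If you restructure Step 2 along these lines, your Steps 1 and 3 carry over essentially unchanged.
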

The oracle property tells us that this estimator has the same asymptotic distribution as that estimator with truly zero $\beta_j$ clamped to zero.

\section{Building in Structure with $\lambda$ Priors}\label{sec:priors}

Existing nonconvex alternatives to our proposed biconvex penalty in the form of the MCP, SCAD and Bridge penalties all come with an extra hyperparameter controlling how close the penalty gets to approximating $||.||_0$ directly.
The proposed approach retains only the single global penalty strength parameter, which in practice means it is possible to simply try all pertinent parameter values via a sequence of warm starts.
However, it does require specification of a prior on $\lambda$.
We now overview some choices for $p_{\boldsymbol\lambda}(\lambda)$ used in our numerical experiments

\subsection{Independent Prior for Unbiased Model Selection}\label{sec:prior_indep}

Inspired by the Horseshoe Prior, we use the half Cauchy $p_{\lambda}(\lambda) = \mathcal{C}(0,  1)^+$ as a baseline adaptive prior, i.e. if there is no \textit{a priori} understood structure in the sparsity.
This allows for large variation in penalty strength and satisfies the decay rates of Theorem \ref{thm:oracle}.

\subsection{Sparse Group Lasso}\label{sec:prior_group}

Let $\mathcal{P} := \{1, \ldots, P\}$ denote the index set of predictor variables.
Then we define a grouping of variables as a partition of this set, $\mathcal{G}_g \subset \mathcal{P}$ with $\cup_{g=1}^G \mathcal{G}_g = \mathcal{P}$ and $\mathcal{G}_i \cap \mathcal{G}_j = \emptyset$ for $i\neq j$.
Denote $g(p)$ as the function which maps a given predictor to the group it belongs to.
One way to formulate the notion that coefficients in the same group should have a shared sparsity is to impose a hierarchical prior with shared parameters within groups.
In particular, we define group-level variables $\gamma_{g}$ with priors $p_\gamma = \mathcal{C}(0,1)^+$ and subsequently $\lambda_i|\gamma_{g(i)} \sim \mathcal{N}(\gamma_{g(i)}, \frac{1}{\sqrt{N}})^+$.
This prior is motivated by the idea that important variables in a given group will pull down that groups $\gamma_g$, leading the $\lambda_i$ for variables in that group to shrink as well and encouraging nonzero $\beta_i$.
We found empirically that a factor of $\frac{1}{\sqrt{N}}$ provided the right balance between the group structure and the likelihood.

\subsection{Overlapping Group Lasso}\label{sec:prior_hier2nd}

Building on the previous subsection, we now consider the case where the sets $\mathcal{G}_g$ overlap. 
In previous literature, authors have been able to bring non-overlapping methods to bear on such overlapping problems by duplicating variables, e.g. making a copy of each main effect for each interaction term it belongs to; one early article implementing this strategy is \cite{obozinski2011group}.
There is no need for such duplication when using our approach.
$g()$ should now be understood as a set valued function, returning the set of groups to which the covariate $p$ belongs.
A variable $\beta_p$ should be encouraged to have a nonzero value if any of its groups are active, i.e. if $\gamma_{i}$ is small for any $i\in g(p)$.
This could be obtained by setting $\lambda_p\sim C\big(\min_{j\in g(p)} \gamma_p, \frac{1}{\sqrt{N}}\big)^+$. 
However, in order to obtain a differentiable prior density, we used $\lambda_p\sim C\big(\sigma(-\frac{\boldsymbol\gamma_{g(p)}}{\sqrt{P}})^\top \boldsymbol\gamma_{g(p)}, \frac{1}{\sqrt{N}}\big)$ where $\sigma$ is the softmax function such that $\sigma(-\frac{\boldsymbol\gamma_{g(p)}}{\sqrt{P}})^\top \boldsymbol\gamma_{g(p)}$ is a smooth approximation to $\min_{j\in g(p)} \gamma_{j}$.
We found empirically that using a temperature of $\sqrt{P}$ in the softmax lead to better performance.

\section{Applications and Numerical Experiments}\label{sec:applications}

We now present two sets of numerical experiments and two case studies. 
For the first of these, we compare our methodology to existing programs on Gaussian and non-Gaussian regressions with four likelihoods on synthetic datasets in Section \ref{sec:exp_synthetic}. 
Section \ref{sec:exp_libsvm} contains a similar comparison but on real data from the UCI repository\footnote{\url{https://archive.ics.uci.edu/}}.
Next come our case studies: first dimension reduction via neural network for understanding vaccination behavior, presented in Section \ref{sec:mosaic}, and finally a model of international human migration (Section \ref{sec:hurdle}).
But first we go into further detail about each of these studies in Section \ref{sec:exp_design}

\subsection{Experimental Design and Modeling Details}\label{sec:exp_design}


\subsubsection{Implementation}

We implement our proximal operators in Python \footnote{Code is available in a Git repo at \url{https://github.com/NathanWycoff/prox_alasso}.} using JAX \citep{jax2018github} and use Tensorflow Probability \citep{dillon2017tensorflow} to define the likelihood functions.
We initialize $\b=\mathbf{0}$ and $\bl=\mathbf{1}$ and set $\mathbf{C}$ using the update rule from the Adam optimizer \citep{kingma2014adam} with a step size of 1e-2. 
We use a minibatch size of 256 in all experiments.
To assess convergence, we used early stopping with a patience of 500 iterations and with $1,000$ randomly sampled observations held out. 

\subsubsection{Non-Gaussian Regression Studies}

For the regression comparisons of Sections \ref{sec:exp_synthetic} and \ref{sec:exp_libsvm}, we compare the runtime and MSE of our JAX program against existing packages on various datasets.
For each likelihood, we use as a first baseline an unpenalized regression:
we use \texttt{statsmodels} \citep{seabold2010statsmodels} maximum likelihood inference for the normal, binomial and Negative Binomial likelihoods and a Huber robust regression for the Cauchy likelihood.
Additionally, we use \texttt{glmnet} \citep{friedman2010regularization} for penalized GLMs with Gaussian, Poisson or Bernoulli likelihood, choosing model complexity via the built-in cross validation function.
For a structured regularization competitor, we consider \citet{grimonprez2023mlgl}, who propose a method for overlapping group Lasso for Gaussian and Bernoulli problems which is implemented in the \texttt{R} package \texttt{MLGL} (for ``Multi-Level Group Lasso"), using BIC to select the optimal regularization strength.

For the synthetic case studies, the $\mathbf{X}$ matrix has entries independently sampled from a standard normal distribution.
We set $N=10,000$ and aim for approximately $1,000$ columns in $\X$; this is achieved exactly in the independent and group sparsity cases and approximately in the hierarchical case by setting the number of variables to 45.
Nonzero regression coefficients are sampled independently from a standard normal distribution, and the data sampled from the resulting model.
For the independent sparsity case, we set 10 parameters to nonzero values, while for the grouped cases, we set 10 groups to jointly nonzero values.

We measure the MSE in terms of estimating the regression coefficients $\b$ for the synthetic case study while for the real data we measure out of sample prediction MSE.
We also measure elapsed real time and repeat each simulation setting 30 times. 
$\texttt{glmnet}$ does not support Cauchy or Negative Binomial likelihoods, so we excluded it from the Cauchy study and used it in Poisson mode for the Negative Binomial study.
MLGL supports only Gaussian and Bernoulli likelihoods, so we excluded it from the Negative Binomial and Cauchy studies.
We use a regularization strength of $\tau=0.025 N$ for the independent case and $\tau=0.015 N$ for the structured case.

For the real data, the 8 test problems from the UCI repository are given in Table \ref{tab:libsvm_problems}.
To evaluate predictive error, we randomly split the dataset 50-50 (following \cite{ida2019fast}) and evaluate the prediction error on the held out half.

\subsubsection{Vaccination Behavior Case Study}

During the COVID-19 pandemic, the major challenge faced by public health authorities in developing a vaccinated population was not the technical challenge of creating vaccines, but the socio-political challenge of getting them into arms.
To study this, we conducted a proability survey of Americans, asking for both basic demographic information and vaccination status for each respondent \citep{singh2024understanding}.
For a subset of respondents who used Twitter\footnote{This study was conducted prior to the change in ownership of that platform.}, we asked them if they would share their handle and collect their data to support computer science and social science research \footnote{IRB STUDY00003571.}.
Our subsample who consented to provide their Twitter handle is of size 425, and for this subsample we determined which accounts they followed via the Twitter API.
We grouped respondents into three categories: ``Early Adopter" if they indicated before the vaccine was widely available that they intended to get it, ``Vaccinated Skeptic" if they initially indicated that they were unsure or did not intend to be vaccinated but eventually did receive a vaccine, and ``Persistent Antivaxer" if they initially indicated that they were unsure or did not intend to vaccinate and were indeed not vaccinated during the study period (March 1 2021 to February 9 2022) \footnote{The fourth possible configuration, of initialy indicating that one would receive the vaccine but failing to follow up on this, consisted only of two respondents who we consquently dropped from the analysis.}.

To gain some insight into what variables are correlated with vaccination behavior, we fit a Deep Active Subspace Classifier (DASC) \citep{tripathy2019deep,edeling2023deep} with a group penalty. 
A DASC  is a multi-layer neural network whose first hidden layer is restricted to have a low dimension; in this study we chose dimension 2 to enable visualization.
Let us denote the weight matrix associated with this layer by $\mathbf{W}_1 \in \mathbb{R}^{2\times P}$, where $P=65$ is the dimension of our input space, consisting of 51 indicator variables encoding whether a given respondent followed a given Twitter account and 14 demographic variables.
The rest of the network architecture is unconstrained, but by forcing a compressed representation we gain some regularization; we use one additional hidden layer of size 512.
By analogy with Sparse PCA methods \citep{zou2006sparse}, we define a Sparse DASC as one with a sparse input weight matrix. 
We used groups of size two, containing both weights mapping a given variable into the two dimensional active subspace, such that a variable is encouraged to be either included in both factors or neither.

\subsubsection{International Migration Case Study}

In this section, we study a dataset of global migration\footnote{We gratefully acknowledge the United Nations High Commissioner for Refugees (UNHCR; \url{https://www.unhcr.org/us/}) for these data.} containing asylum claims filed by individuals moving between 194 countries pairwise and yearly from 2000 to 2021, resulting in 823,724 observations.
Approximately 88\% of the responses variable consists of zeros and the mean number of asylum claims between all pairs at all time points is approximately 56.
To explain this movement data, we use country-level indicators developed by \citet{mayer2011notes},
including 14 variables recorded at the origin\footnote{Namely: population, Consumer Price Index, GDP, GDP at PPP, island Indicator, area of the country, landlocked indicator, Political Repression Index, Civil Liberties Index, Violence Index, estimate of deaths, conflict indicator, length of conflict and it logarithm.}, the same 14 variables at the destination and 8 dyadic variables\footnote{``Dyadic" meaning they are associated with a pair of countries, in particular: neighboring country indicator, common language indicator, common ethnicity indicator, colonial ties indicator (e.g. Britain and Canada), common colonial ties indicator (e.g. Canada and India), colonial ties post 1945 indicator, formerly same country indicator}  giving 36 variables encoding country properties.
Commonly, such data are interpreted in the framework of the \textit{gravity model}.
The gravity model specifies that the exchange between two regions decreases with the distance between them by analogy with the physical law of gravity \citep{tinbergen1962shaping}.
Often, these models are estimated by taking the log of both migration and distance and proceeding either by least squares or via a count model \citep{silva2006log}.
Other factors such as recorded deaths or population also influence the expected exchange and are used in the model.
Furthermore, we expect there to be individual effects associated with countries being more or less resistant to migratory influx versus outflux.
The model captures these via what are called \textit{bilateral resistance to exchange} terms $\omega_{i,j,t}$ which are typically assumed to factor into dummy variables associated with each origin, destination and time period \citep{anderson2003gravity}, i.e. $\omega_{i,j,t} = \omega_i + \omega_j + \omega_t$.
From a classical statistical perspective, this is mathematically equivalent to treating origin, destination and year as blocking effects.
We incorporate these indicator variables in our model alongside the country property variables, leading to 446 total main effects.

We wish to maintain the straightforward interpretability of the standard gravity model.
In order to account for the preponderance of zeros in our dataset, we upgrade this model to a hurdle model \citep{cragg1971some}, which specifies a second set of regression coefficients governing whether or not a given pair of countries will have any exchange at all during a given year:
$z_{i,j,t} \sim Bern(\sigma(\x_{i,j,t}^\top \boldsymbol\beta_\pi))$
where $\sigma$ is the logistic sigmoid function. 
If $z_{i,j,t}=1$, then $y_{i,j,t}=0$. Otherwise, $y_{i,j,t} \sim NB(s(\x_{i,j,t}^\top\boldsymbol\beta_\mu),\alpha)$, where $s$ is the appropriate link function and $\alpha$ controls overdispersion.
However, the assumption that the effect of various factors on migration is constant throughout countries and time is unrealistic.
We therefore entertain a model which considers interaction terms between our covariates and our migration resistance terms $\boldsymbol\omega$, as well as between the $\boldsymbol\omega$ terms themselves. 
This leads to model with $2\times(446 + {446\choose 2})=99,681$ variables and twice that many parameters (excluding $\alpha$), which we will tame with our proposed group sparsity strategy.
In particular, building on the overlapping group lasso of Section \ref{sec:prior_hier2nd}, we place overlapping group priors with a group corresponding to each interaction term both within the mean term $\b_\mu$ and the zero term $\b_\pi$:
\begin{align}
    &\gamma_{i,j} \sim C^+(0,1) \textrm{ for } i\neq j\in \{1, \ldots, P\} \hspace{1em} \textrm{(Group Penalty Strength Prior)}\\
    &\lambda_{\mu,i},\lambda_{\pi,i} \sim C \big(\sigma(-\frac{\boldsymbol\gamma_{g(i)}}{\sqrt{P}})^\top \boldsymbol\gamma_{g(i)}, \frac{1}{\sqrt{N}}\big)  \hspace{1em} \textrm{(Interaction Terms)}\\
    &\lambda_{\mu,i,j},\lambda_{\pi,i,j} \sim N(\gamma_{i,j}, \frac{1}{\sqrt{N}}) \textrm{ for } i\neq j \in \{1, \ldots, P\} \hspace{1em} \textrm{(Main Effects)}
\end{align}

Forming the entire $823,724\times 99,681$ design matrix in 64 bit precision would take about 650 gigabytes, rendering direct inference infeasible.
However, we can deploy a stochastic variant of our proximal gradient algorithm by building the interaction terms on the fly within a minibatch. Denoting $\boldsymbol\theta := [\b_0, \b_m, \alpha]$, we iterate:
\begin{center}
\begin{algorithm}[H]
    \For{$t\in\{1,\ldots,T\}$} {
        $\textrm{subsamp} \gets \textrm{Subsample}(N,m)$ \\
        $\X^s \gets \X[\textrm{subsamp},]$ \\
        $\y^s \gets \y[\textrm{subsamp}]$ \\
        $\X^q \gets \textrm{add\_int}(\X^s)$\\
        $\mathbf{g}^t \gets \hat{\nabla f}(\X^q,\y^s, \boldsymbol\theta^t)$ \\
        $\boldsymbol\theta^{t+1} \gets \textrm{prox\_sgd}(\mathbf{g}^t, \boldsymbol\theta^t)$
    }
\end{algorithm}
\end{center}
Here, the function $\textrm{add\_int}$ calculates the interaction terms for a submatrix (taking $\mathbb{R}^{m\times 446}\to \mathbb{R}^{m\times 99,681}$), and $\textrm{prox\_sgd}$ implements one step of the proximal stochastic gradient descent algorithm \citep{rosasco2014convergence} using the proximal operator of Section \ref{sec:logprox}.
To compare this model with varying penalty strengths predictively, we randomly held out half of the dataset and evaluated out-of-sample accuracy in terms of predictive log likelihood. 

\newcommand{\sscale}{0.5}
\begin{figure}
    \centering
    \includegraphics[width=0.9\textwidth,trim= 10em 0 10em 0,clip]{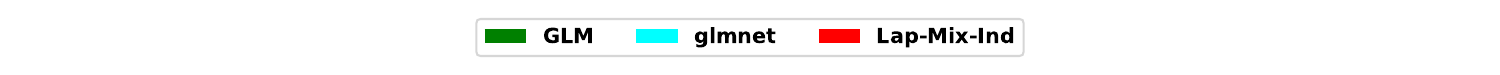}
    \includegraphics[width=0.9\textwidth]{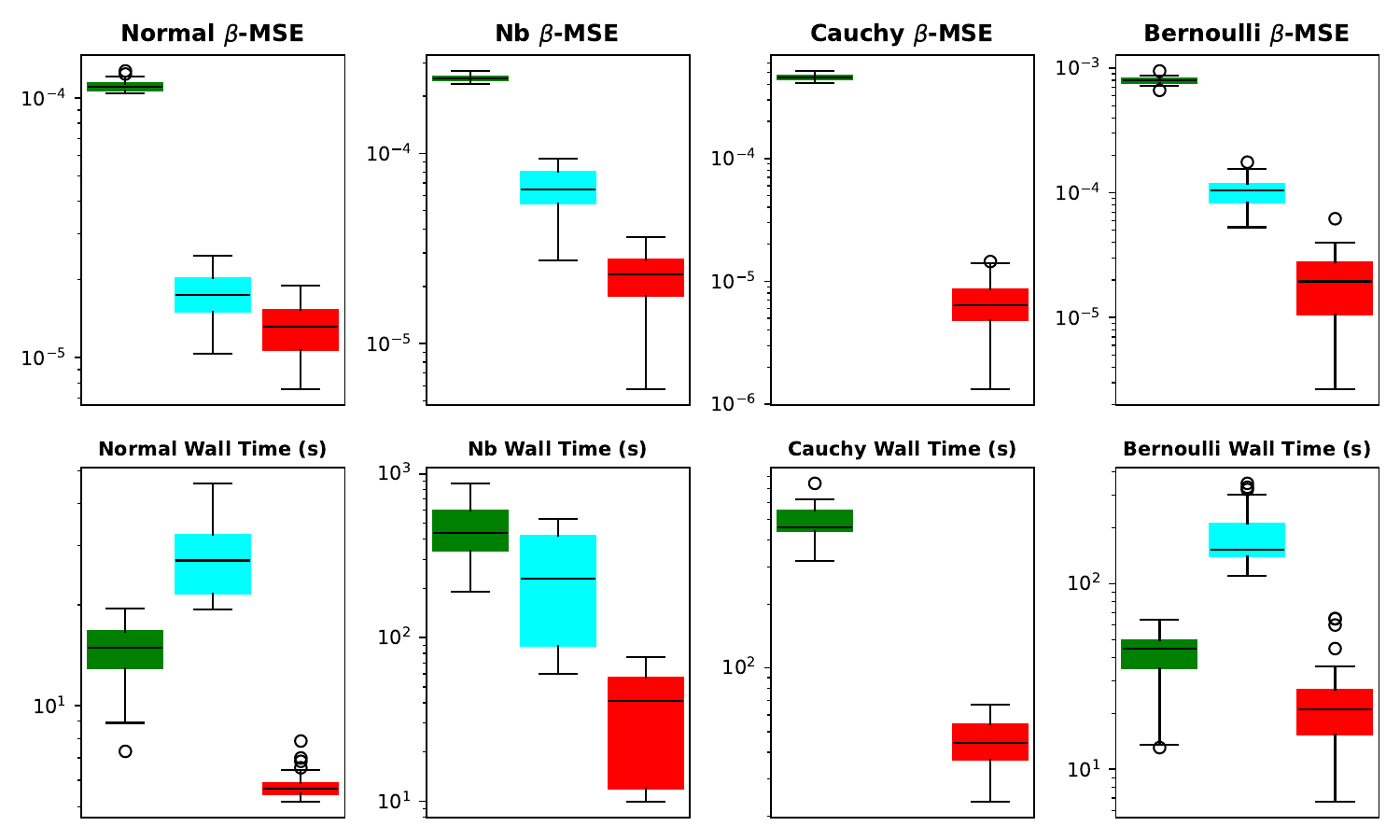}
    \caption{Comparison on synthetic data with independent sparsity.}
    \label{fig:synth_random}
\end{figure}

\begin{figure}
    \centering
    \includegraphics[width=0.9\textwidth,trim= 7em 0 7em 0,clip]{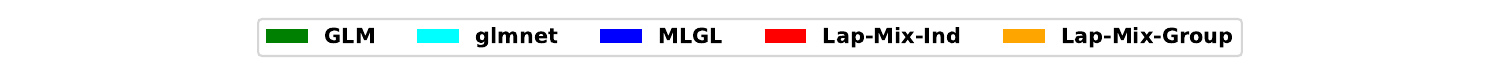}
    \includegraphics[width=0.9\textwidth]{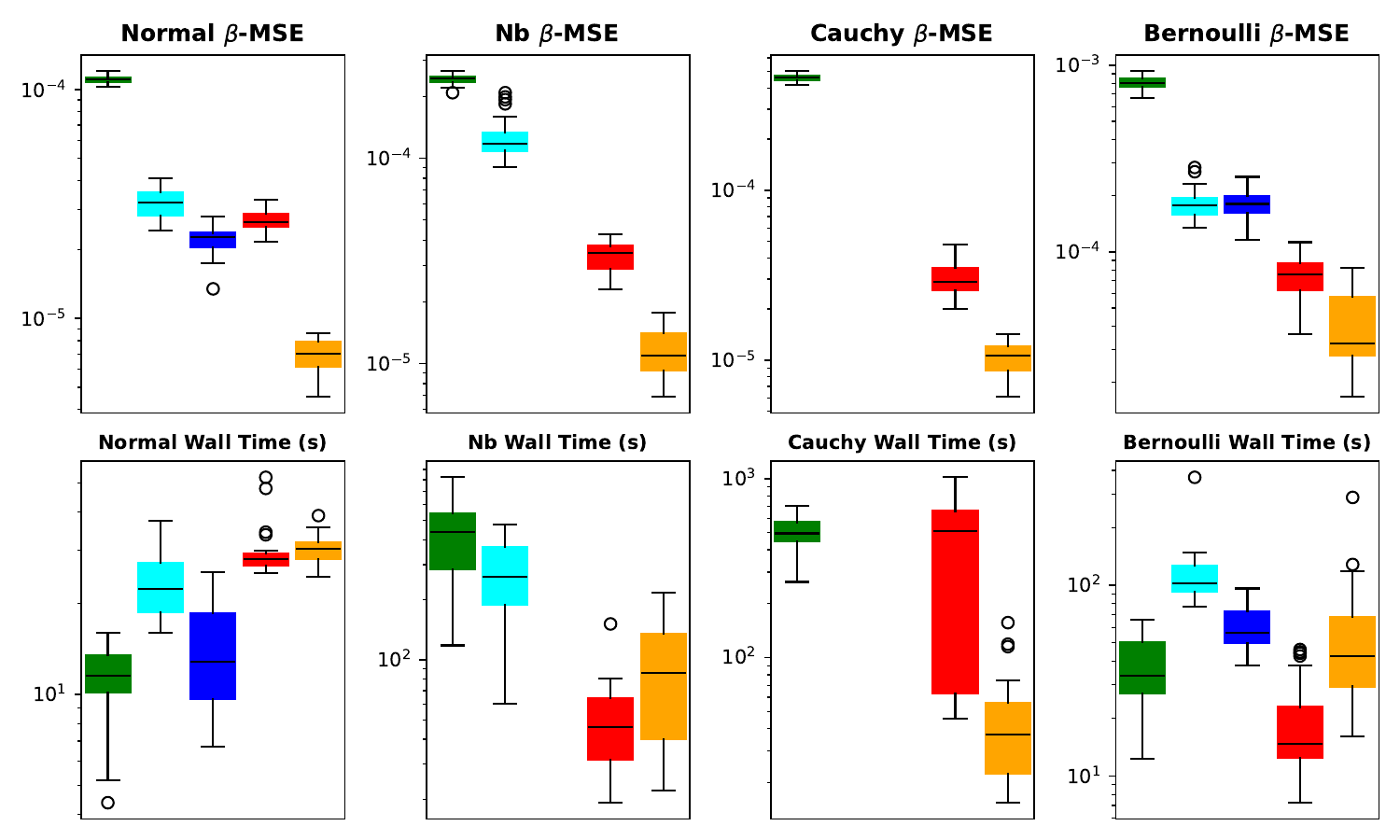}
    \caption{Comparison on synthetic data with group sparsity.}
    \label{fig:synth_group}
\end{figure}

\subsection{Synthetic Regression Simulation Study Results}\label{sec:exp_synthetic}

We now present the results of our regression study on synthetic data.


\begin{figure}
    \centering
    \includegraphics[width=0.9\textwidth,trim= 7em 0 7em 0,clip]{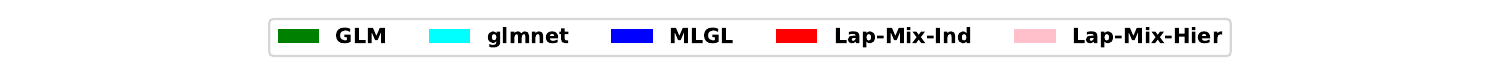}
    \includegraphics[width=0.9\textwidth]{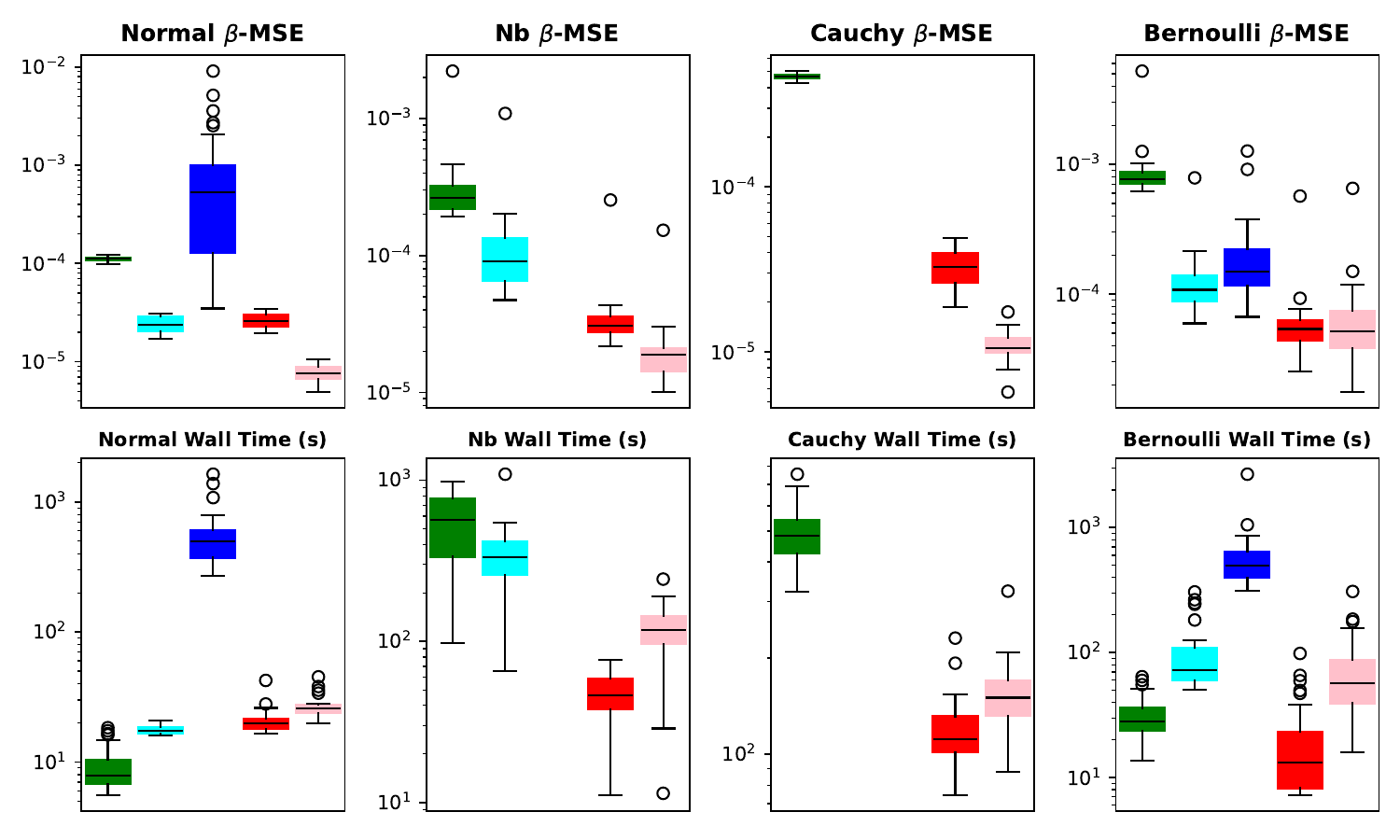}
    \vspace{-1em}
    \caption{Comparison on synthetic data with hierarchical sparsity.}
    \label{fig:synth_hier}
\end{figure}

\subsubsection{Simple Variable Selection} 
We begin with simple sparsity (Figure \ref{fig:synth_random}), that is, sparsity without structure and use the prior of Section \ref{sec:prior_indep}. 
We find that the proposed methodology is able to achieve lower MSE and faster median execution speed than \texttt{glmnet} or unpenalized regression across the normal, negative binomial, and Bernoulli settings.
Furthermore, it is able to seamlessly handle the Cauchy likelihood, which would completely throw off a non-robust regression method like \texttt{glmnet} and at higher accuracy than a standard robust Huber regression which does not exploit sparsity.

\subsubsection{Grouped Variable Selection} 
Next (Figure \ref{fig:synth_group}), we consider groups of size 5 and use the prior of Section \ref{sec:prior_group}.
We find that the grouped method outperforms all other methods in terms of MSE, including our own simple sparsity method.
Furthermore, the execution time is comparable to  other regularized methods on the Gaussian problem, and superior on non-Gaussian likelihoods.

\subsubsection{Hierarchical Variable Selection} 
In our final synthetic simulation study (Figure \ref{fig:synth_hier}), we fit second order models allowing for both interaction and quadratic terms. 
In this case, there is one group associated with each interaction term, and hence there is significant overlap in group membership for main and quadratic effects.
We find that our hierarchical approach prevails in terms of MSE, while offering about an order of magnitude faster execution time than \texttt{MLGL}.

\begin{table}[]
    \centering

    \begin{tabular}{|lrrcl|}
    \toprule
    Problem & N & P & $2P+ {P \choose 2}$ & Likelihood \\
    \midrule
    Obesity & 2111 & 23 & 299 & Normal \\
    Aids & 2139 & 23 & 299 & Bernoulli \\
    Rice & 3810 & 7 & 35 & Bernoulli \\
    Abalone & 4177 & 8 & 44 & Normal \\
    Dropout & 4424 & 36 & 702 & Bernoulli \\
    Spam & 4601 & 57 & 1710 & Bernoulli \\
    Parkinsons & 5875 & 19 & 209 & Normal \\
    Shop & 12330 & 26 & 377 & Bernoulli \\
    \bottomrule
    \end{tabular}
    
    \caption{Problem Summaries; $2P + {P\choose 2}$ gives the size of the second order model.}
    \label{tab:libsvm_problems}
\end{table}

\begin{figure}
    \centering
    \includegraphics[width=0.9\textwidth,trim= 7em 0 7em 0,clip]{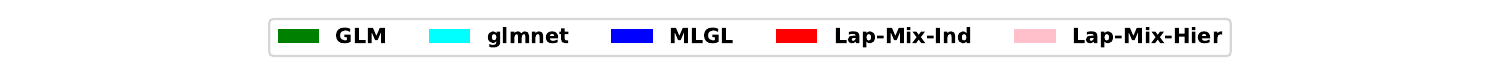}
    \includegraphics[width=0.9\textwidth]{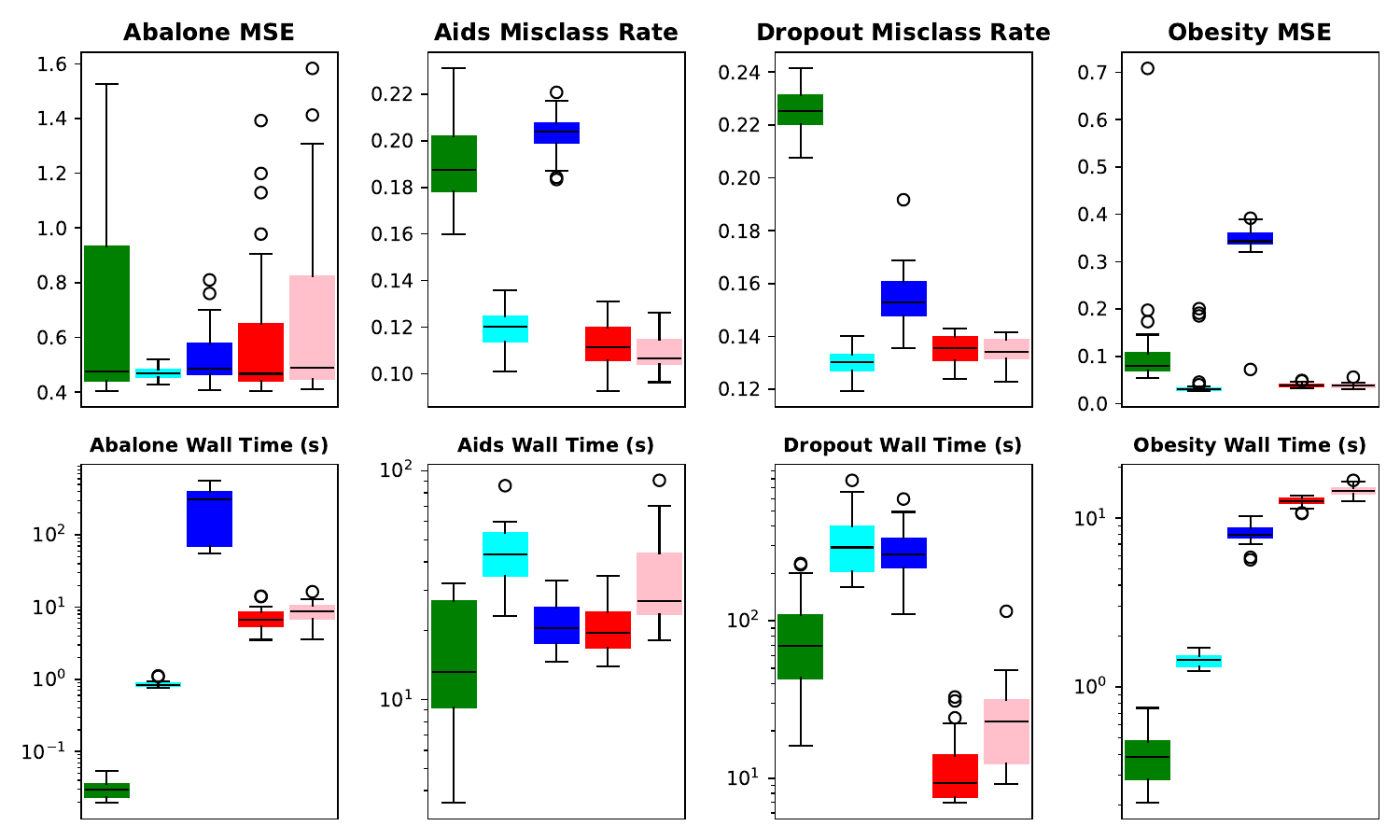}
    \includegraphics[width=0.9\textwidth]{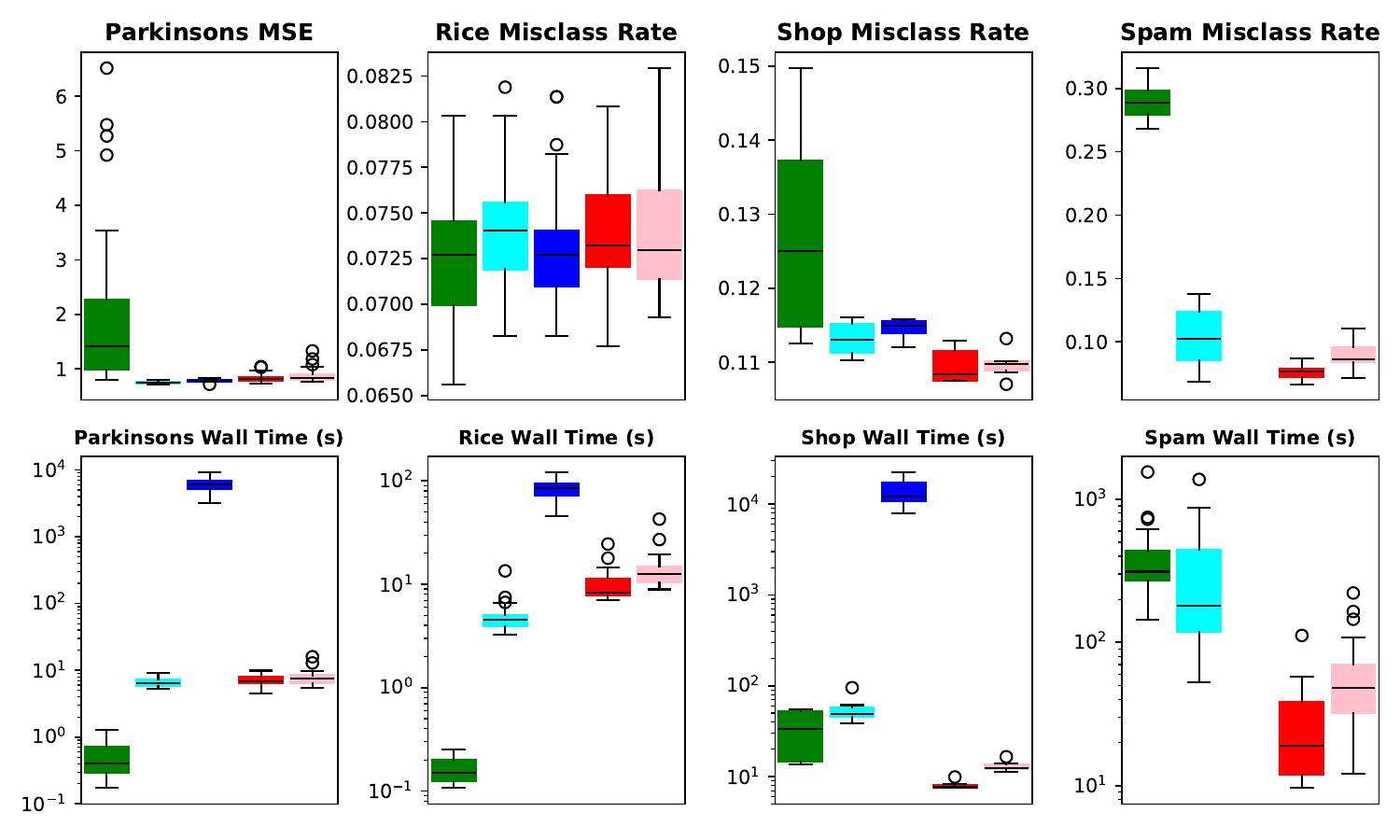}
    \caption{Comparison on real datasets in terms of prediction accuracy. }
    \label{fig:libsvm}
\end{figure}

\subsection{Hierarchical Second Order Regression on Real Data Study Results}\label{sec:exp_libsvm}

Figure \ref{fig:libsvm} shows the results of the real data study.
We find that the proposed methodology is consistently among the best predictors.
Compared to the other structured method \texttt{MLGL}, we see that the proposed estimator achieves orders of magnitude faster execution time on the larger test problems (Parkinsons, Spam, Dropout, Shop) while offering improved accuracy on 5/8 problems, and 3/4 of the biggest datasets.
Furthermore, it is not consistent whether grouped or independent sparsity assumptions lead to improved predictions; 
it may be that the venerated effect hierarchy principle (see e.g. \citet[Chapter 1]{wu2011experiments}) is not particularly relevant to these datasets.

\subsection{Deep Active Subspaces for Vaccination Behavior Results}\label{sec:mosaic}

\begin{figure}
    \centering
    \newcommand{\smg}{0.65}

    \includegraphics[width=0.35\textwidth]{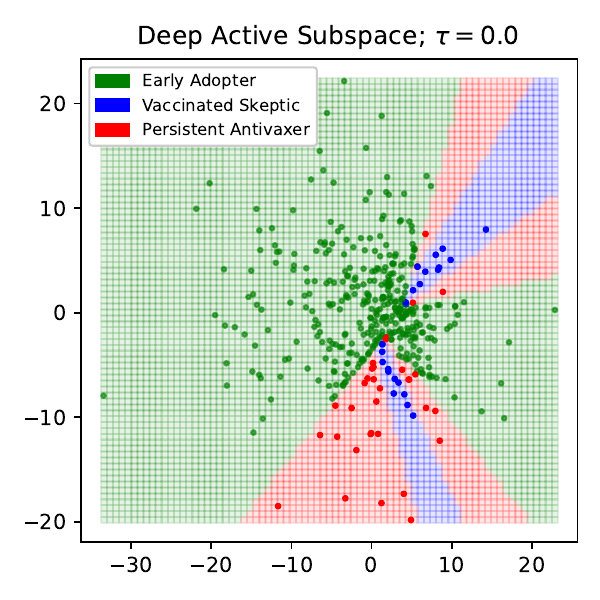}
    \includegraphics[width=0.35\textwidth]{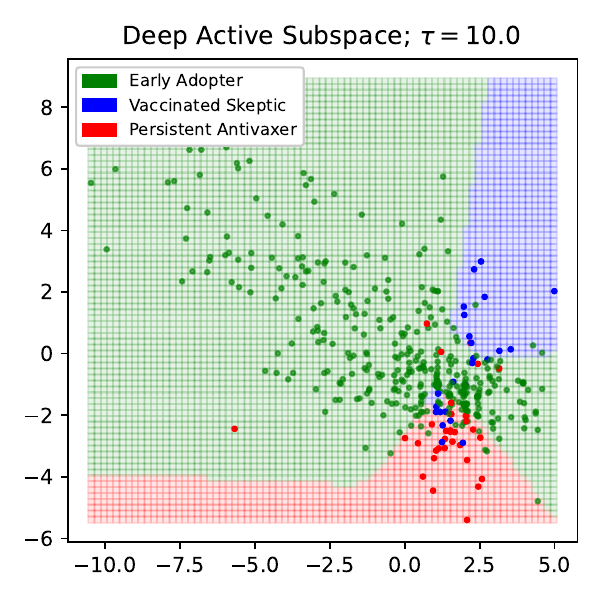}
    \includegraphics[width=0.22\textwidth]{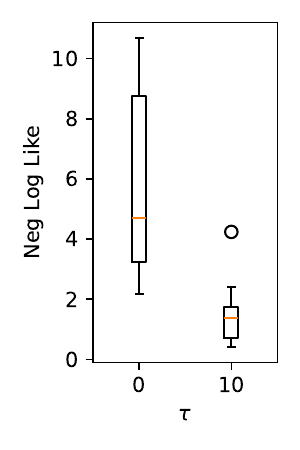}
    

    %
    %
    
    \caption{Regularized Deep Active Subspace reveals two vaccination hesitant groups. 
    Plots show the active subspace projection of the training data
    \textit{Left}: without penalty
    \textit{Mid}: with a penalty strength of $\tau=10$.
    \textit{Right}: 10 fold Cross Validation reveals that the penalty improves predictions.
    }
    \label{fig:mosaic}
\end{figure}

%

Figure \ref{fig:mosaic} shows the results of the DASC analysis for two different regularization levels, first the unpenalized $\tau=0$ and then with a penalty strength $\tau=10$.
We see that the unpenalized deep active subspace produces six major non-contiguous regions for the vaccinated skeptics and persistent antivaxers, while the penalized active subspace yields just a single group containing predominantly persistent antivaxers and a second containing predominantly vaccinated skeptics. 
Furthermore, 10-fold cross validation reveals that this representation outperforms in terms of predictive negative log-likelihood (Figure \ref{fig:mosaic} bottom left).

Intriguingly, rather than finding a continuous spectrum from early adopter to vaccinated skeptics to persistent antivaxers, the regularized analysis projects the vaccinated skeptics and persistent antivaxers such that they are separated by the early adopters.
Analysis of the accounts followed by respondents higher on the y-axis showed that they appear to be comparatively politically disengaged (see Appendix \ref{sec:app_mosaic}), and the majority will eventually be vaccinated.
On the other hand, the early adopters and persistent antivaxers were more likely to be politically engaged, following Democratic and Republican accounts, respectively.


\subsection{Sparse Hurdle Gravity Model for International Migration Results}\label{sec:hurdle}

Figure \ref{fig:hcr} shows parameter estimates for the international migration case study as a function of penalty strength.
We see that the first three parameters to enter the model are interaction terms for $\b_\mu$ involving the \texttt{contig} variable, which is an indicator encoding whether two countries share a border. 
The other variables involved in the interaction are \texttt{colony}, which is an indicator variable encoding colonial ties between the countries, \texttt{deaths\_o} which is the number of conflict deaths in the origin country on the log scale, and \texttt{dist} which gives the great circle distance between the capitals of the two countries. 
Consequently, the model is predicting migration between neighboring countries with colonial ties from a country with many deaths towards countries whose capitals are near.
The right panel shows the predictive log-likelihood.
We see that this procedure was able to comb through about $200,000$ possible parameters to find a simple four-variable model that achieves predictive performance superior to the full and null models.


From a substantive perspective, these results may be viewed as validating the concept behind the ``gravity model” of international migration which centers spatiality as the primary driver of human migration dynamics.
It is notable that economic variables, such as GDP and inflation, were not included in this optimal model despite their being available.
Since we are here looking only at claims for refugee status, which according to the 1951 Refugee Convention extends only to individuals who fear persecution if they were returned home (based on one of five reasons) and not to persons seeking to improve economic opportunities, this may seem \textit{a priori} reasonable.
But our study takes place amidst a contemporary legal-political context in which many argue the legal protections of refugees should stretch to accommodate “survival migrants” who are both unable to return home due to fear of persecution and experience economic deprivation; see, e.g. Betts (2013).
The lack of inclusion of any economic variables is therefore notable.


\begin{figure}
    \centering
    \includegraphics[width=\textwidth,trim={0.9cm 0.55cm 0 0},clip]{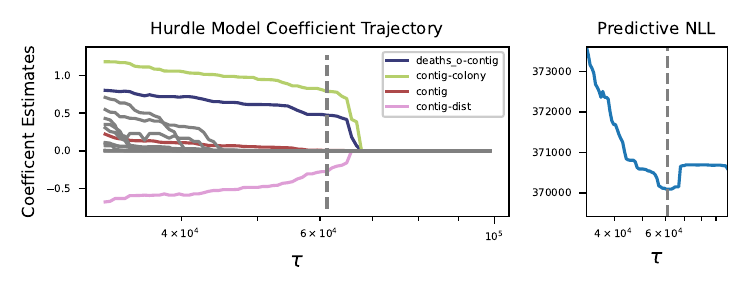}
    \caption{International migration case study.
    \textit{Left:} Regression coefficient trajectory versus regularization strength $\tau$.
    \textit{Right:} Predictive NLL versus $\tau$; lower is better.}
    \label{fig:hcr}
\end{figure}


\section{Discussion}\label{sec:conclusion}

We first summarize our developments and findings before offering substantive conclusions and proposing directions for future work.


\subsection{Summary}
This article developed methods to optimize loss functions associated with the adaptive Lasso penalty with any smooth likelihood and sparsity structure by treating the penalty coefficients as just another decision variable.
We first studied the joint optimization problem with respect to the model parameters $\bb$ and the penalty coefficients $\bl$, developing proximal operators therefor. 
We next developed the basic asymptotic properties of this procedure.
After, we demonstrated how this scheme could induce structured sparsity by proposing some joint hyperpriors for the penalty coefficients.
Next, we pivoted to numerical experiments, first confirming the advantage of the method when the model is correctly specified in simulation studies before finding favorable comparisons to state of the art methods on a battery of real datasets using various likelihoods.
Finally came our main applications.
First, a sparse neural network analysis of vaccination behavior which was able to create a better projection when forced to be sparse. 
And second, a model of international migration using a hurdle model with group sparsity on a large dataset which identified four parameters of around $200,000$ that led to a good predictive model.

\subsection{Conclusions}
We advocate the proposed methodology as an off-the-shelf strategy which can be the first deployed in a black-box manner when working with a complicated likelihood and sparsity structure and a huge dataset, particularly when it is not desired or not possible to investigate the specific structure of that likelihood-sparsity combination.
We hope that it will allow practitioners to use likelihoods and sparsity structures which best fit a particular dataset rather than being restricted to those structures with already well understood proximal operators.
This approach allows bias-free structured sparsity imposition in the general setting, and is easily incorporated into an existing workflow implemented via automatic-differentiation and linear algebra software, which we demonstrated using JAX.

Substantively, with the help of our sparsity procedure we found interesting conclusions for our case studies. 
In the vaccination behavior study, we found that early adopters of the COVID-19 vaccine and those who did not obtain it during the case study were neighbors in the projection implied by the neural network, with those initially hesitant to vaccinate but eventually willing to do so on the outside. 
This was because these first two groups resembled one another in being politically engaged. 
In our international refugee claims case study, the sparsity structure found a simple model which emphasized spatial relations and conflict over economic or political terms which performed well from an out-of-sample perspective.


\subsection{Future Work}
Conducting this research brought into sharp relief challenges associated with performing sparse analysis in nonlinear models in general.
In particular, we found that optimizations on sequences of penalty strengths resulted in noisy coefficient estimates that benefited from postprocessing by a rolling median.
Additionally, we found it advantageous to reset the hyperparameters associated with $\bl$ in the hierarchical priors on each iterate, which disrupts warm-starting.
Also, we did not find a way to make Nesterov acceleration profitable in this context, as often it would propose negative values for $\l$ \textit{after} the proximal step, which violates the positivity constraint.
Finally, while gradient-based methods are quite scalable in terms of per-iteration complexity to massive problems, when the conditioning is poor, the total number of iterations required may be prohibitive. 
This is the case, for example, for a regression with significant multicollinearity in the predictors.
In this setting, higher order optimization methods, namely Fisher Scoring, can prove to be more efficient.
However, from the perspective of the proximal gradient algorithm, this would correspond to a non-diagonal preconditioner, which couples the proximal problem and would require special consideration for efficient solution.
Future work resolving these difficulties has the potential to even further extend the reach of the proposed approach and sparse nonlinear models in general.

There are also open threads within our particular framework.
Firstly, there are many possibilities for sparsity structures left to be explored, such as spatiotemporal sparsity or that based on graph structures.
Since any sparsity structure can benefit from the approach we have outlined, this work is mainly empirical, and would be to determine which sparsity structures most benefit from the either the debiasing effect that the variable-penalty framework has, or from the possibility of accelerated computation through the deployment of a simple proximal operator.
Second, the development of the proximal operator for the variable-penalty $\ell_1$ norm opens the door to a new class of proximal operators associated with other penalties if we allow their penalty coefficients to vary. 
Here, there is interesting work to be done in terms of either finding closed form representations or efficient iterative algorithms for the action of these proximal operators, as well as the empirical question of in what situations this would be superior to the one-size-fits-all approach we have described here.
Additionally, it may be worth investigating the possibility of some convex-analytic structure associated with this new class of proximal operators.

\acks{The authors gratefully acknowledge funding from the Massive Data Institute, the Tech and Public Policy Grant Program, and NSF/NGA grant \#2428033, as well as data from the United Nations High Commissioner for Refugees.}

\bibliography{sample}

\begin{thebibliography}{85}
\providecommand{\natexlab}[1]{#1}
\providecommand{\url}[1]{\texttt{#1}}
\expandafter\ifx\csname urlstyle\endcsname\relax
  \providecommand{\doi}[1]{doi: #1}\else
  \providecommand{\doi}{doi: \begingroup \urlstyle{rm}\Url}\fi

\bibitem[Andersen et~al.(2017)Andersen, Vehtari, Winther, and
  Hansen]{andersen2017bayesian}
Michael~Riis Andersen, Aki Vehtari, Ole Winther, and Lars~Kai Hansen.
\newblock Bayesian inference for spatio-temporal spike-and-slab priors.
\newblock \emph{The Journal of Machine Learning Research}, 18\penalty0
  (1):\penalty0 5076--5133, 2017.

\bibitem[Anderson and Van~Wincoop(2003)]{anderson2003gravity}
James~E Anderson and Eric Van~Wincoop.
\newblock Gravity with gravitas: A solution to the border puzzle.
\newblock \emph{American economic review}, 93\penalty0 (1):\penalty0 170--192,
  2003.

\bibitem[Bach et~al.(2012{\natexlab{a}})Bach, Jenatton, Mairal, and
  Obozinski]{bach2012structured}
Francis Bach, Rodolphe Jenatton, Julien Mairal, and Guillaume Obozinski.
\newblock Structured sparsity through convex optimization.
\newblock \emph{Statistical Science}, 27\penalty0 (4):\penalty0 450--468,
  2012{\natexlab{a}}.

\bibitem[Bach et~al.(2012{\natexlab{b}})Bach, Jenatton, Mairal, Obozinski,
  et~al.]{bach2012optimization}
Francis Bach, Rodolphe Jenatton, Julien Mairal, Guillaume Obozinski, et~al.
\newblock Optimization with sparsity-inducing penalties.
\newblock \emph{Foundations and Trends{\textregistered} in Machine Learning},
  4\penalty0 (1):\penalty0 1--106, 2012{\natexlab{b}}.

\bibitem[Bakin et~al.(1999)]{bakin1999adaptive}
Sergey Bakin et~al.
\newblock Adaptive regression and model selection in data mining problems.
\newblock 1999.

\bibitem[Baraniuk et~al.(2010)Baraniuk, Cevher, Duarte, and
  Hegde]{baraniuk2010model}
Richard~G Baraniuk, Volkan Cevher, Marco~F Duarte, and Chinmay Hegde.
\newblock Model-based compressive sensing.
\newblock \emph{IEEE Transactions on information theory}, 56\penalty0
  (4):\penalty0 1982--2001, 2010.

\bibitem[Bauschke and Combettes(2011)]{bauschke2011convex}
Heinz~H Bauschke and Patrick~L Combettes.
\newblock \emph{Convex Analysis and Monotone Operator Theory in Hilbert
  Spaces}.
\newblock Springer Publishing Company, Incorporated, 2011.

\bibitem[Bhadra et~al.(2019)Bhadra, Datta, Polson, and Willard]{Bhadra2019}
Anindya Bhadra, Jyotishka Datta, Nicholas~G Polson, and Brandon Willard.
\newblock Lasso meets horseshoe: A survey.
\newblock \emph{Statistical Science}, 34\penalty0 (3):\penalty0 405--427, 2019.

\bibitem[Bhattacharya et~al.(2012)Bhattacharya, Pati, Pillai, and
  Dunson]{bhattacharya2012bayesian}
Anirban Bhattacharya, Debdeep Pati, Natesh~S Pillai, and David~B Dunson.
\newblock Bayesian shrinkage.
\newblock \emph{arXiv preprint arXiv:1212.6088}, 2012.

\bibitem[Bradbury et~al.(2018)Bradbury, Frostig, Hawkins, Johnson, Leary,
  Maclaurin, Necula, Paszke, Vander{P}las, Wanderman-{M}ilne, and
  Zhang]{jax2018github}
James Bradbury, Roy Frostig, Peter Hawkins, Matthew~James Johnson, Chris Leary,
  Dougal Maclaurin, George Necula, Adam Paszke, Jake Vander{P}las, Skye
  Wanderman-{M}ilne, and Qiao Zhang.
\newblock {JAX}: composable transformations of {P}ython+{N}um{P}y programs,
  2018.
\newblock URL \url{http://github.com/google/jax}.

\bibitem[Bühlmann and Meier(2008)]{buhlmann2008}
Peter Bühlmann and Lukas Meier.
\newblock {Discussion: One-step sparse estimates in nonconcave penalized
  likelihood models}.
\newblock \emph{The Annals of Statistics}, 36\penalty0 (4):\penalty0 1534 --
  1541, 2008.
\newblock \doi{10.1214/07-AOS0316A}.
\newblock URL \url{https://doi.org/10.1214/07-AOS0316A}.

\bibitem[Cand{\`e}s et~al.(2006)Cand{\`e}s, Romberg, and Tao]{candes2006robust}
Emmanuel~J Cand{\`e}s, Justin Romberg, and Terence Tao.
\newblock Robust uncertainty principles: Exact signal reconstruction from
  highly incomplete frequency information.
\newblock \emph{IEEE Transactions on information theory}, 52\penalty0
  (2):\penalty0 489--509, 2006.

\bibitem[Candes et~al.(2008)Candes, Wakin, and Boyd]{candes2008enhancing}
Emmanuel~J Candes, Michael~B Wakin, and Stephen~P Boyd.
\newblock Enhancing sparsity by reweighted $\ell_1$ minimization.
\newblock \emph{Journal of Fourier analysis and applications}, 14:\penalty0
  877--905, 2008.

\bibitem[Carvalho et~al.(2010)Carvalho, Polson, and Scott]{Carvalho2010}
Carlos~M. Carvalho, Nicholas~G. Polson, and James~G. Scott.
\newblock The horseshoe estimator for sparse signals.
\newblock \emph{Biometrika}, 97\penalty0 (2):\penalty0 465--480, 2010.
\newblock ISSN 00063444, 14643510.
\newblock URL \url{http://www.jstor.org/stable/25734098}.

\bibitem[Cragg(1971)]{cragg1971some}
John~G Cragg.
\newblock Some statistical models for limited dependent variables with
  application to the demand for durable goods.
\newblock \emph{Econometrica: journal of the Econometric Society}, pages
  829--844, 1971.

\bibitem[Dillon et~al.(2017)Dillon, Langmore, Tran, Brevdo, Vasudevan, Moore,
  Patton, Alemi, Hoffman, and Saurous]{dillon2017tensorflow}
Joshua~V Dillon, Ian Langmore, Dustin Tran, Eugene Brevdo, Srinivas Vasudevan,
  Dave Moore, Brian Patton, Alex Alemi, Matt Hoffman, and Rif~A Saurous.
\newblock Tensorflow distributions.
\newblock \emph{arXiv preprint arXiv:1711.10604}, 2017.

\bibitem[Donoho(1995)]{Donoho1995}
D.~L. Donoho.
\newblock De-noising by soft-thresholding.
\newblock \emph{IEEE Transactions on Information Theory}, 41\penalty0
  (3):\penalty0 613--627, May 1995.

\bibitem[Dr{\'e}meau et~al.(2012)Dr{\'e}meau, Herzet, and
  Daudet]{dremeau2012boltzmann}
Ang{\'e}lique Dr{\'e}meau, C{\'e}dric Herzet, and Laurent Daudet.
\newblock Boltzmann machine and mean-field approximation for structured sparse
  decompositions.
\newblock \emph{IEEE Transactions on Signal Processing}, 60\penalty0
  (7):\penalty0 3425--3438, 2012.

\bibitem[Edeling(2023)]{edeling2023deep}
Wouter Edeling.
\newblock On the deep active-subspace method.
\newblock \emph{SIAM/ASA Journal on Uncertainty Quantification}, 11\penalty0
  (1):\penalty0 62--90, 2023.

\bibitem[Efron et~al.(2004)Efron, Hastie, Johnstone, and Tibshirani]{Efron2004}
Bradley Efron, Trevor Hastie, Iain Johnstone, and Robert Tibshirani.
\newblock {Least angle regression}.
\newblock \emph{The Annals of Statistics}, 32\penalty0 (2):\penalty0 407 --
  499, 2004.
\newblock \doi{10.1214/009053604000000067}.
\newblock URL \url{https://doi.org/10.1214/009053604000000067}.

\bibitem[Fan and Li(2001)]{fan2001variable}
Jianqing Fan and Runze Li.
\newblock Variable selection via nonconcave penalized likelihood and its oracle
  properties.
\newblock \emph{Journal of the American statistical Association}, 96\penalty0
  (456):\penalty0 1348--1360, 2001.

\bibitem[Friedman et~al.(2010{\natexlab{a}})Friedman, Hastie, and
  Tibshirani]{friedman2010regularization}
Jerome Friedman, Trevor Hastie, and Rob Tibshirani.
\newblock Regularization paths for generalized linear models via coordinate
  descent.
\newblock \emph{Journal of statistical software}, 33\penalty0 (1):\penalty0 1,
  2010{\natexlab{a}}.

\bibitem[Friedman et~al.(2010{\natexlab{b}})Friedman, Tibshirani, and
  Hastie]{glmnet}
Jerome Friedman, Robert Tibshirani, and Trevor Hastie.
\newblock Regularization paths for generalized linear models via coordinate
  descent.
\newblock \emph{Journal of Statistical Software}, 33\penalty0 (1):\penalty0
  1--22, 2010{\natexlab{b}}.
\newblock \doi{10.18637/jss.v033.i01}.

\bibitem[Gilbert et~al.(2002)Gilbert, Guha, Indyk, Muthukrishnan, and
  Strauss]{gilbert2002near}
Anna~C Gilbert, Sudipto Guha, Piotr Indyk, Shanmugavelayutham Muthukrishnan,
  and Martin Strauss.
\newblock Near-optimal sparse fourier representations via sampling.
\newblock In \emph{Proceedings of the thiry-fourth annual ACM symposium on
  Theory of computing}, pages 152--161, 2002.

\bibitem[Grimonprez et~al.(2023)Grimonprez, Blanck, Celisse, and
  Marot]{grimonprez2023mlgl}
Quentin Grimonprez, Samuel Blanck, Alain Celisse, and Guillemette Marot.
\newblock {MLGL}: an {R} package implementing correlated variable selection by
  hierarchical clustering and group-lasso.
\newblock \emph{Journal of Statistical Software}, 106:\penalty0 1--33, 2023.

\bibitem[Gui et~al.(2016)Gui, Sun, Ji, Tao, and Tan]{gui2016feature}
Jie Gui, Zhenan Sun, Shuiwang Ji, Dacheng Tao, and Tieniu Tan.
\newblock Feature selection based on structured sparsity: A comprehensive
  study.
\newblock \emph{IEEE transactions on neural networks and learning systems},
  28\penalty0 (7):\penalty0 1490--1507, 2016.

\bibitem[Hare and Sagastiz{\'a}bal(2009)]{hare2009computing}
Warren Hare and Claudia Sagastiz{\'a}bal.
\newblock Computing proximal points of nonconvex functions.
\newblock \emph{Mathematical Programming}, 116\penalty0 (1):\penalty0 221--258,
  2009.

\bibitem[Helgøy and Li(2019)]{helgoy2019sparse}
Ingvild~M. Helgøy and Yushu Li.
\newblock A {B}ayesian lasso based sparse learning model.
\newblock \emph{arXiv preprint arXiv:1908.07220}, 2019.
\newblock \doi{10.48550/ARXIV.1908.07220}.
\newblock URL \url{https://arxiv.org/abs/1908.07220}.

\bibitem[Hoerl and Kennard(1970)]{hoerl1970ridge}
Arthur~E Hoerl and Robert~W Kennard.
\newblock Ridge regression: Biased estimation for nonorthogonal problems.
\newblock \emph{Technometrics}, 12\penalty0 (1):\penalty0 55--67, 1970.

\bibitem[Huang et~al.(2011)Huang, Zhang, and Metaxas]{huang2011learning}
Junzhou Huang, Tong Zhang, and Dimitris Metaxas.
\newblock Learning with structured sparsity.
\newblock \emph{Journal of Machine Learning Research}, 12:\penalty0 3371--3412,
  2011.

\bibitem[Hunter and Li(2005)]{Hunter2005}
David~R. Hunter and Runze Li.
\newblock {Variable selection using MM algorithms}.
\newblock \emph{The Annals of Statistics}, 33\penalty0 (4):\penalty0 1617 --
  1642, 2005.
\newblock \doi{10.1214/009053605000000200}.
\newblock URL \url{https://doi.org/10.1214/009053605000000200}.

\bibitem[Ida et~al.(2019)Ida, Fujiwara, and Kashima]{ida2019fast}
Yasutoshi Ida, Yasuhiro Fujiwara, and Hisashi Kashima.
\newblock Fast sparse group {L}asso.
\newblock \emph{Advances in Neural Information Processing Systems}, 32, 2019.

\bibitem[Jacob et~al.(2009)Jacob, Obozinski, and Vert]{jacob2009group}
Laurent Jacob, Guillaume Obozinski, and Jean-Philippe Vert.
\newblock Group lasso with overlap and graph lasso.
\newblock In \emph{Proceedings of the 26th annual international conference on
  machine learning}, pages 433--440, 2009.

\bibitem[Jenatton et~al.(2011{\natexlab{a}})Jenatton, Audibert, and
  Bach]{jenatton2011structured}
Rodolphe Jenatton, Jean-Yves Audibert, and Francis Bach.
\newblock Structured variable selection with sparsity-inducing norms.
\newblock \emph{The Journal of Machine Learning Research}, 12:\penalty0
  2777--2824, 2011{\natexlab{a}}.

\bibitem[Jenatton et~al.(2011{\natexlab{b}})Jenatton, Mairal, Obozinski, and
  Bach]{jenatton2011proximal}
Rodolphe Jenatton, Julien Mairal, Guillaume Obozinski, and Francis Bach.
\newblock Proximal methods for hierarchical sparse coding.
\newblock \emph{The Journal of Machine Learning Research}, 12:\penalty0
  2297--2334, 2011{\natexlab{b}}.

\bibitem[Johnson and Zhang(2013)]{johnson2013accelerating}
Rie Johnson and Tong Zhang.
\newblock Accelerating stochastic gradient descent using predictive variance
  reduction.
\newblock \emph{Advances in neural information processing systems}, 26, 2013.

\bibitem[Kang and Guo(2009)]{kang2009self}
Jian Kang and Jian Guo.
\newblock Self-adaptive lasso and its {B}ayesian estimation.
\newblock Technical report, Working Paper, 2009.

\bibitem[Kim and Xing(2012)]{kim2012tree}
Seyoung Kim and Eric~P Xing.
\newblock Tree-guided group lasso for multi-response regression with structured
  sparsity, with an application to eqtl mapping.
\newblock \emph{The Annals of Applied Statistics}, 6\penalty0 (3):\penalty0
  1095--1117, 2012.

\bibitem[Kingma and Ba(2014)]{kingma2014adam}
Diederik~P Kingma and Jimmy Ba.
\newblock Adam: A method for stochastic optimization.
\newblock \emph{arXiv preprint arXiv:1412.6980}, 2014.

\bibitem[Leng et~al.(2014)Leng, Tran, and Nott]{leng2014bayesian}
Chenlei Leng, Minh-Ngoc Tran, and David Nott.
\newblock Bayesian adaptive lasso.
\newblock \emph{Annals of the Institute of Statistical Mathematics},
  66\penalty0 (2):\penalty0 221--244, 2014.

\bibitem[Li and Pong(2015)]{li2015global}
Guoyin Li and Ting~Kei Pong.
\newblock Global convergence of splitting methods for nonconvex composite
  optimization.
\newblock \emph{SIAM Journal on Optimization}, 25\penalty0 (4):\penalty0
  2434--2460, 2015.

\bibitem[Li(1991)]{li1991sliced}
Ker-Chau Li.
\newblock Sliced inverse regression for dimension reduction.
\newblock \emph{Journal of the American Statistical Association}, 86\penalty0
  (414):\penalty0 316--327, 1991.

\bibitem[Makalic and Schmidt(2015)]{Makalic2015}
Enes Makalic and Daniel~F Schmidt.
\newblock A simple sampler for the horseshoe estimator.
\newblock \emph{IEEE Signal Processing Letters}, 23\penalty0 (1):\penalty0
  179--182, 2015.

\bibitem[Mallick and Yi(2014)]{mallick2014new}
Himel Mallick and Nengjun Yi.
\newblock A new {B}ayesian lasso.
\newblock \emph{Statistics and its interface}, 7\penalty0 (4):\penalty0 571,
  2014.

\bibitem[Marjanovic and Solo(2013)]{marjanovic2013exact}
Goran Marjanovic and Victor Solo.
\newblock On exact $\ell_q$ denoising.
\newblock In \emph{2013 IEEE International Conference on Acoustics, Speech and
  Signal Processing}, pages 6068--6072. IEEE, 2013.

\bibitem[Mayer and Zignago(2011)]{mayer2011notes}
Thierry Mayer and Soledad Zignago.
\newblock Notes on cepii’s distances measures: The geodist database.
\newblock 2011.

\bibitem[Meinshausen(2007)]{meinshausen2007relaxed}
Nicolai Meinshausen.
\newblock Relaxed lasso.
\newblock \emph{Computational Statistics \& Data Analysis}, 52\penalty0
  (1):\penalty0 374--393, 2007.

\bibitem[Mitchell and Beauchamp(1988)]{Mitchell1998}
T.~J. Mitchell and J.~J. Beauchamp.
\newblock Bayesian variable selection in linear regression.
\newblock \emph{Journal of the American Statistical Association}, 83\penalty0
  (404):\penalty0 1023--1032, 1988.
\newblock \doi{10.1080/01621459.1988.10478694}.
\newblock URL
  \url{https://www.tandfonline.com/doi/abs/10.1080/01621459.1988.10478694}.

\bibitem[Mosci et~al.(2010)Mosci, Rosasco, Santoro, Verri, and
  Villa]{mosci2010solving}
Sofia Mosci, Lorenzo Rosasco, Matteo Santoro, Alessandro Verri, and Silvia
  Villa.
\newblock Solving structured sparsity regularization with proximal methods.
\newblock In \emph{Machine Learning and Knowledge Discovery in Databases:
  European Conference, ECML PKDD 2010, Barcelona, Spain, September 20-24, 2010,
  Proceedings, Part II 21}, pages 418--433. Springer, 2010.

\bibitem[Obozinski and Bach(2012)]{obozinski2012convex}
Guillaume Obozinski and Francis Bach.
\newblock Convex relaxation for combinatorial penalties.
\newblock \emph{arXiv preprint arXiv:1205.1240}, 2012.

\bibitem[Obozinski et~al.(2011)Obozinski, Jacob, and Vert]{obozinski2011group}
Guillaume Obozinski, Laurent Jacob, and Jean-Philippe Vert.
\newblock Group lasso with overlaps: the latent group lasso approach.
\newblock \emph{arXiv preprint arXiv:1110.0413}, 2011.

\bibitem[Ohlsson et~al.(2010)Ohlsson, Ljung, and Boyd]{ohlsson2010segmentation}
Henrik Ohlsson, Lennart Ljung, and Stephen Boyd.
\newblock Segmentation of {ARX}-models using sum-of-norms regularization.
\newblock \emph{Automatica}, 46\penalty0 (6):\penalty0 1107--1111, 2010.

\bibitem[Parikh et~al.(2014)Parikh, Boyd, et~al.]{parikh2014proximal}
Neal Parikh, Stephen Boyd, et~al.
\newblock Proximal algorithms.
\newblock \emph{Foundations and Trends in Optimization}, 1\penalty0
  (3):\penalty0 127--239, 2014.

\bibitem[Park and Casella(2008)]{Park2008}
Trevor Park and George Casella.
\newblock The {B}ayesian lasso.
\newblock \emph{Journal of the American Statistical Association}, 103\penalty0
  (482):\penalty0 681--686, 2008.
\newblock \doi{10.1198/016214508000000337}.
\newblock URL \url{https://doi.org/10.1198/016214508000000337}.

\bibitem[Polson et~al.(2015)Polson, Scott, and Willard]{polson2015proximal}
Nicholas~G Polson, James~G Scott, and Brandon~T Willard.
\newblock Proximal algorithms in statistics and machine learning.
\newblock \emph{Statistical Science}, 30\penalty0 (4):\penalty0 559--581, 2015.

\bibitem[Rangan(2011)]{rangan2011generalized}
Sundeep Rangan.
\newblock Generalized approximate message passing for estimation with random
  linear mixing.
\newblock In \emph{2011 IEEE International Symposium on Information Theory
  Proceedings}, pages 2168--2172. IEEE, 2011.

\bibitem[Rosasco et~al.(2014)Rosasco, Villa, and
  V{\~u}]{rosasco2014convergence}
Lorenzo Rosasco, Silvia Villa, and Bang~C{\^o}ng V{\~u}.
\newblock Convergence of stochastic proximal gradient algorithm.
\newblock \emph{arXiv preprint arXiv:1403.5074}, 2014.

\bibitem[Sandhu et~al.(2021)Sandhu, Khalil, Pettit, Poirel, and
  Sarkar]{sandhu2021nonlinear}
Rimple Sandhu, Mohammad Khalil, Chris Pettit, Dominique Poirel, and Abhijit
  Sarkar.
\newblock Nonlinear sparse {B}ayesian learning for physics-based models.
\newblock \emph{Journal of Computational Physics}, 426:\penalty0 109728, 2021.

\bibitem[Schniter(2010)]{schniter2010turbo}
Philip Schniter.
\newblock Turbo reconstruction of structured sparse signals.
\newblock In \emph{2010 44th Annual Conference on Information Sciences and
  Systems (CISS)}, pages 1--6. IEEE, 2010.

\bibitem[Seabold and Perktold(2010)]{seabold2010statsmodels}
Skipper Seabold and Josef Perktold.
\newblock statsmodels: Econometric and statistical modeling with python.
\newblock In \emph{9th Python in Science Conference}, 2010.

\bibitem[Sebbouh et~al.(2019)Sebbouh, Gazagnadou, Jelassi, Bach, and
  Gower]{sebbouh2019towards}
Othmane Sebbouh, Nidham Gazagnadou, Samy Jelassi, Francis Bach, and Robert
  Gower.
\newblock Towards closing the gap between the theory and practice of svrg.
\newblock \emph{Advances in neural information processing systems}, 32, 2019.

\bibitem[Shervashidze and Bach(2015)]{shervashidze2015learning}
Nino Shervashidze and Francis Bach.
\newblock Learning the structure for structured sparsity.
\newblock \emph{IEEE Transactions on Signal Processing}, 63\penalty0
  (18):\penalty0 4894--4902, 2015.

\bibitem[Silva and Tenreyro(2006)]{silva2006log}
JMC~Santos Silva and Silvana Tenreyro.
\newblock The log of gravity.
\newblock \emph{The Review of Economics and statistics}, 88\penalty0
  (4):\penalty0 641--658, 2006.

\bibitem[Simon et~al.(2013)Simon, Friedman, Hastie, and
  Tibshirani]{simon2013sparse}
Noah Simon, Jerome Friedman, Trevor Hastie, and Robert Tibshirani.
\newblock A sparse-group lasso.
\newblock \emph{Journal of computational and graphical statistics}, 22\penalty0
  (2):\penalty0 231--245, 2013.

\bibitem[Singh et~al.(2024)Singh, Bao, Bode, Budak, Pasek, Raghunathan,
  Traugott, Wang, and Wycoff]{singh2024understanding}
Lisa Singh, Le~Bao, Leticia Bode, Ceren Budak, Josh Pasek, Trivellore
  Raghunathan, Michael Traugott, Yanchen Wang, and Nathan Wycoff.
\newblock Understanding the rationales and information environments for early,
  late, and nonadopters of the covid-19 vaccine.
\newblock \emph{npj Vaccines}, 9\penalty0 (1):\penalty0 168, 2024.

\bibitem[Slawski et~al.(2010)Slawski, zu~Castell, and Tutz]{slawski2010feature}
Martin Slawski, Wolfgang zu~Castell, and Gerhard Tutz.
\newblock Feature selection guided by structural information.
\newblock \emph{The Annals of Applied Statistics}, 4\penalty0 (2):\penalty0
  1056--1080, 2010.

\bibitem[Taylor et~al.(1979)Taylor, Banks, and McCoy]{taylor1979deconvolution}
Howard~L Taylor, Stephen~C Banks, and John~F McCoy.
\newblock Deconvolution with the $\ell_1$ norm.
\newblock \emph{Geophysics}, 44\penalty0 (1):\penalty0 39--52, 1979.

\bibitem[Terenin et~al.(2019)Terenin, Dong, and Draper]{Terenin2019}
Alexander Terenin, Shawfeng Dong, and David Draper.
\newblock {GPU}-accelerated {G}ibbs sampling: a case study of the horseshoe
  probit model.
\newblock \emph{Statistics and Computing}, 29\penalty0 (2):\penalty0 301--310,
  2019.

\bibitem[Tibshirani(1996)]{Tibshirani1996}
Robert Tibshirani.
\newblock Regression shrinkage and selection via the lasso.
\newblock \emph{Journal of the Royal Statistical Society. Series B
  (Methodological)}, 58\penalty0 (1):\penalty0 267--288, 1996.
\newblock ISSN 00359246.
\newblock URL \url{http://www.jstor.org/stable/2346178}.

\bibitem[Tibshirani et~al.(2005)Tibshirani, Saunders, Rosset, Zhu, and
  Knight]{tibshirani2005sparsity}
Robert Tibshirani, Michael Saunders, Saharon Rosset, Ji~Zhu, and Keith Knight.
\newblock Sparsity and smoothness via the fused lasso.
\newblock \emph{Journal of the Royal Statistical Society: Series B (Statistical
  Methodology)}, 67\penalty0 (1):\penalty0 91--108, 2005.

\bibitem[Tibshirani et~al.(2012)Tibshirani, Bien, Friedman, Hastie, Simon,
  Taylor, and Tibshirani]{tibshirani2012strong}
Robert Tibshirani, Jacob Bien, Jerome Friedman, Trevor Hastie, Noah Simon,
  Jonathan Taylor, and Ryan~J Tibshirani.
\newblock Strong rules for discarding predictors in lasso-type problems.
\newblock \emph{Journal of the Royal Statistical Society: Series B (Statistical
  Methodology)}, 74\penalty0 (2):\penalty0 245--266, 2012.

\bibitem[Tinbergen(1962)]{tinbergen1962shaping}
J.~Tinbergen.
\newblock \emph{Shaping the World Economy: Suggestions for an International
  Economic Policy}.
\newblock Twentieth Century Fund study. Twentieth Century Fund, 1962.
\newblock URL \url{https://books.google.com/books?id=yD4EAAAAMAAJ}.

\bibitem[Tipping(2001)]{tipping2001sparse}
Michael~E Tipping.
\newblock Sparse {B}ayesian learning and the relevance vector machine.
\newblock \emph{Journal of machine learning research}, 1\penalty0
  (Jun):\penalty0 211--244, 2001.

\bibitem[Tripathy and Bilionis(2019)]{tripathy2019deep}
Rohit Tripathy and Ilias Bilionis.
\newblock Deep active subspaces: A scalable method for high-dimensional
  uncertainty propagation.
\newblock In \emph{International Design Engineering Technical Conferences and
  Computers and Information in Engineering Conference}, volume 59179, page
  V001T02A074. American Society of Mechanical Engineers, 2019.

\bibitem[Wu and Hamada(2011)]{wu2011experiments}
CF~Jeff Wu and Michael~S Hamada.
\newblock \emph{Experiments: planning, analysis, and optimization}.
\newblock John Wiley \& Sons, 2011.

\bibitem[Xiao and Zhang(2014)]{xiao2014proximal}
Lin Xiao and Tong Zhang.
\newblock A proximal stochastic gradient method with progressive variance
  reduction.
\newblock \emph{SIAM Journal on Optimization}, 24\penalty0 (4):\penalty0
  2057--2075, 2014.

\bibitem[Yuan et~al.(2011)Yuan, Liu, and Ye]{yuan2011efficient}
Lei Yuan, Jun Liu, and Jieping Ye.
\newblock Efficient methods for overlapping group lasso.
\newblock \emph{Advances in neural information processing systems}, 24, 2011.

\bibitem[Yuan and Lin(2006)]{yuan2006model}
Ming Yuan and Yi~Lin.
\newblock Model selection and estimation in regression with grouped variables.
\newblock \emph{Journal of the Royal Statistical Society: Series B (Statistical
  Methodology)}, 68\penalty0 (1):\penalty0 49--67, 2006.

\bibitem[Zhang(2010)]{Zhang2010}
Cun-Hui Zhang.
\newblock {Nearly unbiased variable selection under minimax concave penalty}.
\newblock \emph{The Annals of Statistics}, 38\penalty0 (2):\penalty0 894 --
  942, 2010.
\newblock \doi{10.1214/09-AOS729}.
\newblock URL \url{https://doi.org/10.1214/09-AOS729}.

\bibitem[Zhao et~al.(2016)Zhao, Gao, Mukherjee, and
  Engelhardt]{zhao2016bayesian}
Shiwen Zhao, Chuan Gao, Sayan Mukherjee, and Barbara~E Engelhardt.
\newblock Bayesian group factor analysis with structured sparsity.
\newblock \emph{The Journal of Machine Learning Research}, 2016.

\bibitem[Zhou et~al.(2012)Zhou, Liu, Narayan, and Ye]{zhou2012modeling}
Jiayu Zhou, Jun Liu, Vaibhav~A Narayan, and Jieping Ye.
\newblock Modeling disease progression via fused sparse group lasso.
\newblock In \emph{Proceedings of the 18th ACM SIGKDD international conference
  on Knowledge discovery and data mining}, pages 1095--1103, 2012.

\bibitem[Ziniel et~al.(2012)Ziniel, Rangan, and
  Schniter]{ziniel2012generalized}
Justin Ziniel, Sundeep Rangan, and Philip Schniter.
\newblock A generalized framework for learning and recovery of structured
  sparse signals.
\newblock In \emph{2012 IEEE Statistical Signal Processing Workshop (SSP)},
  pages 325--328. IEEE, 2012.

\bibitem[Zou(2006)]{zou2006adaptive}
Hui Zou.
\newblock The adaptive lasso and its oracle properties.
\newblock \emph{Journal of the American statistical association}, 101\penalty0
  (476):\penalty0 1418--1429, 2006.

\bibitem[Zou and Li(2008)]{Zou2008}
Hui Zou and Runze Li.
\newblock {One-step sparse estimates in nonconcave penalized likelihood
  models}.
\newblock \emph{The Annals of Statistics}, 36\penalty0 (4):\penalty0 1509 --
  1533, 2008.
\newblock \doi{10.1214/009053607000000802}.
\newblock URL \url{https://doi.org/10.1214/009053607000000802}.

\bibitem[Zou et~al.(2006)Zou, Hastie, and Tibshirani]{zou2006sparse}
Hui Zou, Trevor Hastie, and Robert Tibshirani.
\newblock Sparse principal component analysis.
\newblock \emph{Journal of computational and graphical statistics}, 15\penalty0
  (2):\penalty0 265--286, 2006.

\end{thebibliography}

\appendix

\section{Extended Proofs for Section 3}\label{sec:app_proofs}

For maximum accessibility, we rely as much as possible on elementary calculus and use subdifferential calculus sparingly.

\begin{customlemma}{1}
    The marginal cost of P1 with respect to $\lambda$ (i.e. with $x$ profiled out) is the following piecewise quadratic expression:
\begin{equation}
     \underset{\lambda>0}{\mathrm{argmin}}  \begin{cases}
        \frac{1}{2}(\frac{1}{s_\lambda} - s_x)\lambda^2+(|x_0|-\frac{\lambda_0}{s_\lambda})\lambda + \frac{\lambda_0^2}{2s_\lambda} & \lambda < \frac{|x_0|}{s_x} \\
        \frac{(\lambda-\lambda_0)^2}{2s_\lambda}+\frac{x_0^2}{2s_x} & \lambda \geq \frac{|x_0|}{s_x}\,\, , \\
     \end{cases}
\end{equation}
where the changepoint $\lambda=\frac{|x_0|}{s_x}$ is the point where $\lambda$ is just large enough to push $x$ to zero.
\end{customlemma}
\begin{proof}
    Convert to nested optimization and exploit the fact that the solution for known $\lambda$ is given by the soft thresholding operator:
\begin{align}
    &\underset{x\in\mathbb{R},\lambda>0}{\mathrm{argmin}} \,\,  \lambda |x|+\frac{(x-x_0)^2}{2s_x} + \frac{(\lambda-\lambda_0)^2}{2s_\lambda} \\
    &\iff 
    \underset{\lambda>0}{\mathrm{argmin}} \,\, \frac{(\lambda-\lambda_0)^2}{2s_\lambda} + \underset{x\in\mathbb{R}}{\mathrm{argmin}} \,\, \lambda |x|+\frac{(x-x_0)^2}{2s_x}  \\
    &\iff \underset{\lambda>0}{\mathrm{argmin}} \,\, \frac{(\lambda-\lambda_0)^2}{2s_\lambda} + \lambda (|x_0|-s_x\lambda)^+ +\frac{(|x_0|-s_x\lambda)^{+,2}-2|x_0|(|x_0|-s_x\lambda)^+}{2s_x}  \,\, . \label{eq:simpl}
\end{align}
\end{proof}

\begin{customthm}{2}
    The optimizing $\lambda$ for the proximal program P1 is given by, when $s_xs_\lambda<1$:
    \begin{equation}\label{eq:prox1s}
        \lambda^* =\begin{cases} 
          \lambda_0 & \lambda_0 \geq \frac{|x_0|}{s_x} \\
          \frac{(\lambda_0-s_\lambda|x_0|)^+}{1-s_\lambda s_x} & o.w. \,\,\,\, ,
       \end{cases} 
    \end{equation}
    and by $\lambda^* = \mathbbm{1}_{\big[\frac{\lambda_0}{\sqrt{s_l}} > \frac{|x_0|}{\sqrt{s_x}}\big]} \lambda_0$ (here $\mathbbm{1}$ denotes the indicator function) otherwise.
In either case $x^* = (|x_0|-s_x\lambda^*)^+\mathrm{sgn}(x_0)$.
\end{customthm}
\begin{proof}
    We need only find the optimum of each interval. When $\lambda\geq \frac{|x_0|}{s_x}$, the optimum is simply as close as we can get to $\lambda_0$, namely $\lambda\gets\max[\lambda_0,\frac{x_0}{s_x}]$. 
On the other hand, when $\lambda \leq \frac{|x_0|}{s_x}$, if $s_xs_\lambda<1$, the optimum is as close as we can get to the stationary point $\frac{(\lambda_0-s_\lambda|x_0|)}{1-s_\lambda s_x}$, explicitly $\lambda\gets\min[\frac{(\lambda_0-s_\lambda|x_0|)^+}{1-s_\lambda s_x},\frac{|x_0|}{s_x}]$. When $s_xs_\lambda\geq1$, however, the solution is at one of the interval boundaries $[0,\frac{|x_0|}{s_x}]$; the boundaries have costs of $\frac{\lambda_0^2}{2s_\lambda}$ and $\frac{(\frac{|x_0|}{s_x}-\lambda_0)^2}{2s_\lambda}+\frac{x_0^2}{2s_x}$, respectively, and so we choose $\lambda\gets 0$ if $\frac{\lambda_0^2}{2s_\lambda}<\frac{(\frac{|x_0|}{s_x}-\lambda_0)^2}{2s_\lambda}+\frac{x_0^2}{2s_x}$ and $\lambda\gets\frac{|x_0|}{s_x}$ otherwise. But the cost at $\lambda=\lambda_0$ is only $\frac{x_0^2}{2s_x}$, so the choice is between $0$ and $\lambda_0$ with costs $\frac{\lambda_0^2}{2s_\lambda}$ and $\frac{x_0^2}{2s_x}$. 
\end{proof}

\begin{customthm}{5}
    Consider the problem
    $\underset{x\in\mathbb{R},\lambda>0}{\mathrm{argmin}} \,\,  \lambda |x|+\frac{(x-x_0)^2}{2s_x} + \frac{(\lambda-\lambda_0)^2}{2s_\lambda} - a \log\lambda$.
    If $2\beta_0 \leq s_\beta (\lambda_0^2 + \sqrt{\lambda_0^2+4 s_\lambda'})$, the solution will be:
    \begin{equation}
        (\beta^*,\lambda^*) = 
        \Bigg(0,
        \frac{\lambda_0^2 + \sqrt{\lambda_0^2+4 s_\lambda a}}{2}
        \Bigg)\,.
    \end{equation}
    Otherwise, we have
    \begin{equation}
        \lambda^* = \frac{
        s_\lambda|\beta_0| - \lambda_0
        \pm\sqrt{
        (\lambda_0-s_\lambda\beta_0)^2 + 4(1-s_\beta s_\lambda) s_\lambda a
        }
        }{
        2(s_\beta s_\lambda-1)
        }\,
    \end{equation}
    and subsequently 
    $\beta^* = (|\beta_0|-s_\beta\lambda^*)^+\mathrm{sgn}(\beta_0)$.
\end{customthm}

\begin{proof}

The stationarity conditions give:
\begin{align}
    & 0 \in \lambda \textrm{ sgn}(\beta) + \frac{\beta-\beta_0}{s_\beta} \tag{a}\\
    & 0 = |\beta| + \frac{\lambda-\lambda_0}{s_\lambda} - \frac{a}{\lambda} \tag{b} \, .
\end{align}

Let's first consider these conditions when $\beta=0$.
In this case, (b) gives us that $\lambda = \frac{\lambda_0^2 \pm \sqrt{\lambda_0^2+4 s_\lambda a}}{2}$, and since the smaller of these is always negative, we have $\lambda^* =  \frac{\lambda_0^2 + \sqrt{\lambda_0^2+4 s_\lambda a}}{2}$.

Plugging this into (a) yields that $|\beta_0|\leq s_\beta\lambda^*$.
So we have that $\beta^*=0$ when $2\beta_0 \leq s_\beta (\lambda_0^2 + \sqrt{\lambda_0^2+4 s_\lambda a})$.

If not, we know that $\beta^*\neq 0$ and (a) becomes an ordinary equation. 
In this case, we get that
\begin{equation}
    \lambda^* = \frac{
    s_\lambda|\beta_0| - \lambda_0
    \pm\sqrt{
    (\lambda_0-s_\lambda\beta_0)^2 + 4(1-s_\beta s_\lambda) s_\lambda a
    }
    }{
    2(s_\beta s_\lambda-1)
    } \, .
\end{equation}
\end{proof}

\section{Extended Proofs for Section 4}\label{sec:asymp_appendix}

\begin{customlemma}{7}
	The following hold, where $\lambda^*$ denotes the optimizing $\lambda$, and is formally a function of $\tau$ and $|\beta|$:
 \begin{multicols}{2}
	\begin{enumerate}
		\item $\lambda^* = \frac{1}{\tau|\beta| + \rho'(\lambda^*)}$.
		\item $\frac{\partial \lambda^*}{\partial |\beta|} = -\frac{\tau}{\frac{1}{\lambda^{*2}} + \rho''(\lambda^*)}$.
		\item $g_{\tau}'(|\beta|) =  \tau \lambda^*$.
		\item $g_{\tau}''(|\beta|) =  -\frac{\tau_n^2}{(\tau_n+\rho'(\lambda^*))^2+\rho''(\lambda^*)}$.
	\end{enumerate}
 \end{multicols}
\end{customlemma}
\begin{proof}
    It will be convenient to develop notation for the cost function inside our penalty: $g_\tau(|\beta|) = \underset{\lambda>0}{\min} \,\, \big[\tau\lambda|\beta| - \log\lambda + \rho(\lambda)\big] := \underset{\lambda>0}{\min} \,\, c^p(|\beta|,\lambda)$.
    For 1, since $\lambda^*$ is the optimizing $\lambda$, and due to the $-\log\lambda$ constraining the optimum to be an interior point, we know that $0=\frac{\partial}{\partial\lambda} \big[\tau\lambda|\beta| - \log\lambda + \rho(\lambda)\big] = \tau|\beta| - \frac{1}{\lambda} + \rho'(\lambda)$.
    For 2, we can use implicit differentiation on this same equation.
    For 3, we simply note that $\frac{\partial}{\partial|\beta|}g_{\tau}(|\beta|) = \frac{\partial}{\partial|\beta|} \big[\tau\lambda^*|\beta| - \log\lambda^* + \rho(\lambda^*)\big] = \tau\lambda^* + \frac{\partial\lambda^*}{\partial|\beta|}\frac{\partial c^p(|\beta|,\lambda)}{\partial\lambda}\Bigr|_{\lambda^*} = \tau\lambda^*$. 
    4 proceeds by differentiating 3 and plugging in 1 and 2.
\end{proof}

\begin{customthm}{8}
    Assume that the logarithmic derivative of the hyperprior density on $\lambda$ is bounded ($|\rho'(\lambda)|\leq M_1 \,\, \forall \lambda\geq0$) and that the density is decreasing on $(0,\infty)$. Then:
    \begin{enumerate}
        \item $g'_{\tau}(|\beta|)\approx\frac{1}{|\beta|}$ for large $\beta$.
        \item The minimum of $|\beta|+g'_{\tau}(|\beta|)$ is achieved at $\beta=0$ with value $\lambda_a\tau$.
    \end{enumerate}
\end{customthm}
\begin{proof}
    For 1, $g'(|\beta|) = \frac{\tau}{\tau|\beta|+\rho(\lambda^*)}$, and since $\rho(\lambda)$ is bounded, $\underset{|\beta|\to\infty}{\lim}\frac{\tau}{\tau|\beta|+\rho(\lambda^*)} = \frac{1}{|\beta|}$.
    For 2, let $\lambda_a$ be the $\lambda$ such that $\frac{1}{\lambda_a} = \rho'(\lambda_a)$ (which is unique by the assumption that $\rho$ is increasing). Note that $\lambda_a \leq \lambda^*$ and $\lambda_a = \lambda^*(0)$. So each term of $|\beta|+\lambda^*\tau$ is decreasing in $|\beta|$ individually, and so the minimum of their sum must occur at $0$, yielding value $\lambda_a\tau$.
\end{proof}

\begin{customthm}{9}
    Let $\tau_n=n\tau_0$ for $\tau_0>0$, and further assume that $|\rho''(|\lambda|)|<M_2$ (bounded second logarithmic derivative). Then, under the following standard regularity conditions on the likelihood:
    \begin{enumerate}
        \item The data $\mathbf{y}_i$ are i.i.d. with density function $f(\y;\bb)$ providing for common support and model identifiability. 
        We assume it has a score function with expectation zero $\mathbb{E}_{\bb}\Big[\nabla_{\bb} \log f(\y;\bb)\Big] = \mathbf{0}$ and a Fisher information expressible in terms of second derivatives: $I(\bb) = \mathbb{E}_{\bb}\Big[\nabla_{\bb}^2 \log f(\y;\bb) \Big]$. 
        \item The information matrix is finite and positive definite when $\bb=\tb$, with $\tb$ the true parameter vector. 
        \item For some open subset $\mathcal{B}$ containing $\tb$,  for almost all $\y$, the density is thrice differentiable $\forall\bb\in\mathcal{B}$ and that $\Bigr|\frac{\partial^3 \log f(\y;\bb)}{\partial\beta_i\partial\beta_j\partial\beta_k}\Bigr|\leq M_{i,j,k}(\y)$, also over $\mathcal{B}$, where the functions $M$ are such that $\mathbb{E}_{\tb}[M_{i,j,k}(\y)]<\infty$.
    \end{enumerate} 
   there is a local minimum of $Q(\bb) = -l(\bb) + \sum_{p=1}^P g_{\tau_n}(|\beta_p|)$ that satisfies the following:
    \begin{enumerate}
        \item $\hat{\bb}_2=\mathbf{0}$ with probability approaching 1 as $n\to\infty$.
        \item $\hat{\bb}_1$ is asymptotically normal with covariance given approximately by $\frac{1}{n}I(\tb_1)$, the Fisher information matrix considering only active variables.
    \end{enumerate}
\end{customthm}
\begin{proof}

This proof is based on the derivation in \citet{fan2001variable} Theorem 2.
However, there are some differences between their context and ours.
Most saliently, while the penalty that \citet{fan2001variable} study grows so as to overpower the likelihood within a neighborhood of size $O_p(\frac{1}{\sqrt{N}})$ of the origin, the penalty studied here achieves this only within a neighborhood of size $O_p(\frac{1}{N})$ of the origin, requiring a slight variation of the proof strategy.

We assume for convenience that it is the first $R$ entries of $\b$ that are nonzero, i.e. $\tb^\top = [\tb_1, \tb_2]^\top=[\tb_1, \mathbf{0}]^\top$.
We'll denote its estimator as $\hb^\top = [\tb_1, \tb_2]^\top$.

We will begin by assuming that $\hb_2$ is clamped to $\mathbf{0}$, and seek to show that in such a situation $\hb_1$ converges to $\tb_1$ with the usual rate.
We will start by establishing that, with any probability $1-\epsilon$ and $N$ sufficiently large, we have
\begin{equation}
     \underset{\Vert\uu\Vert_2 = C}{\min}
     Q\Bigg(
     \begin{bmatrix}
         \tb_1 \\ \mathbf{0}
     \end{bmatrix}
     + \frac{1}{\sqrt N}
     \begin{bmatrix}
         \uu \\ \mathbf{0}
     \end{bmatrix}
     \Bigg) > Q(\tb)    
\end{equation}
This is because, using a series expansion of the log likelihood about $\tb$, we can write:
\begin{align*}
     &
     Q\Bigg(
     \begin{bmatrix}
         \tb_1 \\ \mathbf{0}
     \end{bmatrix}
     + \frac{1}{\sqrt N}
     \begin{bmatrix}
         \uu \\ \mathbf{0}
     \end{bmatrix}
     \Bigg) - Q(\tb)    
     =
     - 
     \frac{\nabla_{\b_1} l(\tb)^\top}{\sqrt N} \uu + 
     \frac{1}{2} \uu^\top I(\tb_1) \uu (1+o_P(1)) \\
     & 
     + \sum_{p=1}^R \frac{1}{\tilde{\beta}_p} \frac{u_p}{\sqrt N}\big(1+o(1)\big)
     + \sum_{p=1}^R \frac{u_p^2}{N}\big(1+o(1)\big) \,,
\end{align*}
where we have used the asymptotics for $g$ implied by Lemma 7 and used the fact that $-\frac{\nabla^2_{\b_1} l(\tb)}{N}=I(\tb_1)+o_P(1)$ by assumption.

Since $\frac{\nabla_{\b_1} l(\tb)^\top}{\sqrt N}$ is stochastically bounded, it's clear that it will eventually be dominated by the term quadratic in $\uu$ for sufficiently large $C$. 
Since the terms introduced by the penalty vanish, the quadratic term will dominate them too, and since the matrix $I(\tb_1)$ is positive definite (as it is a submatrix of $I(\tb)$ which was assumed positive definite), the whole expression is negative.


Next, we will show that this $\hb^\top=[\hb_1, \mathbf{0}]^\top$ is actually optimal with respect to $\hb_2$ as well with high probability for sufficiently large $N$.
We'll do this by showing that, for $C>0$:
\begin{equation}
    \textrm{sgn}\Bigg(\frac{\partial Q(\hb)}{\partial \beta_p}\Bigg)
    =
    \textrm{sgn}(\beta_p)
    \,\, \textrm{ for } \beta_p \in \Big(-\frac{C}{N},0\Big)\cup \Big(0,\frac{C}{N}\Big) \,\,. 
\end{equation}

Starting again with a series expansion yields:
\begin{align}
    &
    \frac{\partial Q(\hb)}{\partial \beta_p}
    =
    - \frac{\partial l(\tb)}{\partial \beta_p}
    - \sum_{i=1}^R \frac{\partial^2 l(\tb)}{\partial \beta_p \partial \hat\beta_i} 
    (\hat{\beta_i}-\tilde{\beta_i})
    \\&
    - \sum_{i=1}^R\sum_{j=1}^R \frac{\partial^3 l(\b^*)}{\partial \beta_p \partial \beta_i \partial \beta_j} 
    (\hat{\beta_i}-\tilde{\beta_i})
    (\hat{\beta_j}-\tilde{\beta_i})
    + \frac{1}{\beta_p}(1+o(1))
\end{align}
where $\b^*$ is associated with the Peano form of the remainder.
The dominant term in the likelihood is 
$- \frac{\partial l(\tb)}{\partial \beta_p}$, which is $O_p(\sqrt{n})$.
By contrast, $|\frac{1}{\beta_p}|>\frac{N}{C}$.
So the penalty's contribution dominates all other terms in magnitude, and the sign of the expression is the sign of $\frac{1}{\beta_j}$.

This establishes that there is a local optimizer of $Q$ which has both $\hb_2=0$ and $\Vert\hb_1-\tb_1\Vert_2 = O_P(\frac{1}{\sqrt{n}})$.
We'll conclude by establishing the asymptotic distribution of $\hb_1$.
Since 
\begin{align}
    \mathbf{0} = \frac{1}{\sqrt N}\nabla_{\b_1} Q(\b)\big\vert_{\hb_1}
    =
    - \frac{1}{\sqrt{N}}\nabla_{\b_1} l(\tb) 
    - \sqrt{N} I(\tb_0) (\hb-\tb) + o_P(1) - 
    \frac{1}{\sqrt N}
    \begin{bmatrix}
        \frac{1}{\hat\beta_1} \\
        \vdots \\
        \frac{1}{\hat\beta_R}
    \end{bmatrix}
\end{align}
such that by the fact that 
$
\frac{1}{\sqrt{N}}\nabla_{\b_1} l(\tb) \to N(\mathbf{0},\mathbf{I})
$
together with Slutsky's theorem:
\begin{equation}
    \sqrt{N} (\hb_1 - \tb_1) \to N\big(\mathbf{0}, I(\tb_1)^{-1}\big) \,\,.
\end{equation}

\end{proof}

\section{Vaccination Behavior Case Study Additional Analysis}\label{sec:app_mosaic}

\begin{figure}
    \centering
    \newcommand{\smg}{0.4}
    \includegraphics[scale=\smg]{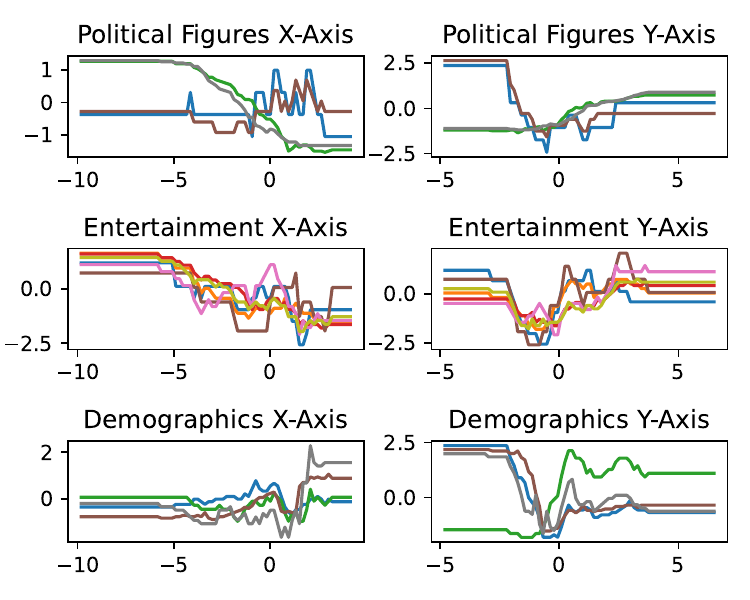}
    \includegraphics[scale=\smg]{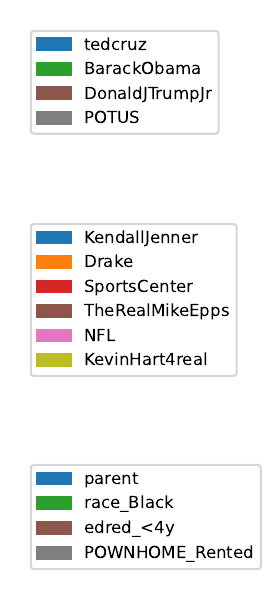}
    \includegraphics[scale=\smg]{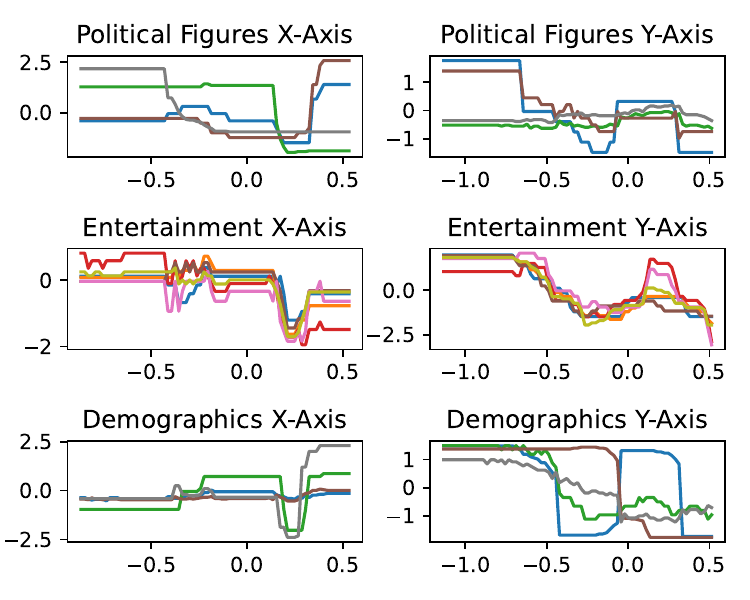}
    \includegraphics[scale=\smg]{images/mosaic_nn_legend.pdf}
    \caption{Inverse Regressions for the high penalty (left) and low penalty (right) settings.}
    \label{fig:inverse}
\end{figure}

In order to gain intuition about the learned low-dimensional structure, we perform inverse regression \citep{li1991sliced}; see Figure \ref{fig:inverse}.
The value of an input variable along a reduced axis is given by the average of the 100 points which are closest to a given tick of that axis after projection.
We show only those variables that were still selected in higher-penalty analysis with $\tau = 100$ (not pictured).
We see that low values of x-axis encodes engagement with the Democratic party, following Barack Obama and the POTUS account (Joe Biden when data were recorded), while high values of the x-axis are associated with renting and lower engagement with entertainment figures.
Low values of the y-axis are associated with following Republican accounts (Ted Cruz and Donald Trump Jr) as well as with renting, parenthood, and lack of a 4 year college degree.
High y-axis values, on the other hand, are associated with less political engagement and the Black demographic variable.
The middle of the y-axis is less likely to follow entertainment accounts than the extremes.

\end{document}